\newtheorem{definition}{Definition}[section]
\newtheorem{theorem}{Theorem}[section]
\begin{document}

\articletype{Paper} 

\title{Overparameterized Multiple Linear Regression\\ as Hyper-Curve Fitting}

\author{Elisa Atza$^1$\orcid{0009-0002-5483-7271} and Neil Budko$^{1,*}$\orcid{0000-0002-9649-7386}}

\affil{$^1$Numerical Analysis, DIAM, EEMCS, Delft University of Technology, Delft, The Netherlands}

\affil{$^*$Author to whom any correspondence should be addressed.}

\email{n.v.budko@tudelft.nl}

\keywords{hyper-curve fitting, high-dimensional regression, dimensionality reduction, model selection, variable selection, basis-function expansion, implicit regularization, generalization error}

\begin{abstract}
This work demonstrates that applying a fixed-effect multiple linear regression (MLR) model to an overparameterized dataset is mathematically equivalent to fitting a hyper-curve parameterized by a single scalar. This reformulation shifts the focus from global coefficients to individual predictors, allowing each to be modeled as a function of a common parameter. We prove that this overparameterized linear framework can yield exact predictions even when the underlying data contains nonlinear dependencies that violate classical linear assumptions. By employing parameterization in terms of the dependent variable and a monomial basis, we validate this approach on both synthetic and experimental datasets. Our results show that the hyper-curve perspective provides a robust framework for regularizing problems with noisy predictors and offers a systematic method for identifying and removing 'improper' predictors that degrade model generalizability.
\end{abstract}

\section{Introduction}
Many models encountered in practical applications of Machine Learning (ML) are overparameterized. Some are superficially so, e.g., a linear random-effect model \cite{MEM2022} with many predictors that come from the same probability distribution, which by itself is described by a few unknown parameters only. Others, like a linear fixed-effect model, also known as the Multiple Linear Regression (MLR) model, with fewer training samples than unknowns, may happen to be truly overparameterized. While there is no universally accepted definition of an overparameterized ML model, research in this area has recently uncovered several interesting phenomena, such as the double-dipping of the prediction error \cite{holzmuller2020universality}, and the so-called benign overfitting \cite{BenignOverfitting2020}, \cite{BenignOverfitting2022}. 

The problems where the number of predictors is extremely large are common in many applications. Recent technological developments in chemistry, biology, medicine and agriculture have allowed for high-throughput data acquisition pipelines resulting in relatively large amounts of predictor variables compared to the feasible number of experiments in which a target phenotype or a trait is measured. For instance, in chemometric research, where one seeks to predict a physical or biological property from the infrared spectrum of the substance, the number of spectral components is in the order of thousands \cite{Chemomet2005} \cite{Chemomet2023}. In metabolomics, the biological traits are predicted from the relative abundance of small-molecule chemicals in a biological sample, the number of metabolites can reach tens of thousands \cite{metabol2010}, \cite{metabo2018}. Similar numbers of predictors are used in the microbiome-based predictions \cite{Micro2020}. Finally, in genomics, the number of Single-Nucleotide Polymorphisms (SNP's), that may potentially predict the phenotype of some living organism, can be as high as tens of millions  \cite{Genom2021}, \cite{Genom2023}. At the same time, the duration and costs of experiments aimed at measuring the dependent variables (phenotype, traits, etc.) are often much higher. Therefore, the number of such experiments, i.e., the number of training samples, is typically a fraction of the number of predictors (features) -- hundreds or a few thousands at most \cite{BioPred2013}, \cite{BioPred2015}, \cite{BioPred2016}, thus making the overparameterized nature of such problems inescapable.

Due to the enormous success of the Artificial Neural Networks (ANN's) in image classification, the current focus in the omics-related ML is on the application of these sophisticated nonlinear models to the existing and new data. However, truly deep ANN's are also overparameterized, which leads to learning problems when the number of training samples is low and require a large training/validation dataset to tune hyperparameters and avoid overfitting. Sometimes this problem is circumvented by creating an artificial augmented training dataset \cite{SSDNN2021}. Therefore, it is not surprising that whenever the training is performed on small datasets, the quality of predictions obtained with an ANN is only marginally better, if at all, when compared to the predictions with the simple overparameterized MLR model \cite{ANNvsMLR2022}, \cite{DNNvsLR2020}. 

One could argue that the good results by the overparameterized MLR are also due to overfitting or a poor testing procedure (e.g. data leakage). While this certainly may be the case in some practical studies, generally, the fixed-effect MLR model with fewer training samples than predictors is well-understood on a theoretical level \cite{vogel2002},\cite{Izenman2008}. The solution to the resulting underdetermined linear system, if it exists, is not unique. As the next best thing, one aims at recovering the minimum-norm least-squares (LS) solution, which always exists and is unique. The minimum-norm LS solution may, however, be sensitive to noise in the dependent variable. If the level of noise in the dependent variable is known, then there is a regularized solution, which minimizes the error with respect to the noise-free minimum-norm LS solution. If, as it often happens in practice, the level of noise in the dependent variable is not known, then the level of regularization can either be estimated from the data with the cross-validation method or a similar technique, or has to be chosen subjectively. The case of the noise in predictor variables is much less understood, but there exist various errors-in-variables models \cite{TLS2002}.

The main focus of the present paper is the adequacy, performance and optimization of overparameterized linear models. Specifically, the authors aimed at understanding the nature of overparameterized datasets, identifying the circumstances where the MLR model makes good or bad predictions of such datasets, improving the interpretability of regression results beyond the traditional feature weights, and increasing the prediction accuracy, simultaneously making the model more adequate, i.e., satisfying the linear assumptions. This has lead us to the column-centered reformulation of the MLR, where the (inverse) relation between each predictor variable and the dependent variable can to a large extent be analyzed independently of other predictor variables. We show that the predictions made by such an Inverse Regression (IR) model are identical to the predictions of the MLR model on a class of overparameterized datasets. Topologically this means that on such datasets the MLR model is not a hyper-plane as suggested by its mathematical form, but a hyper-curve, parameterizable by a single scalar parameter, which, for the sake of interpretability can be chosen as the dependent variable.

Due to its variable-centered nature, the hyper-curve approach allows to filter out the predictors (features) that are either too noisy or do not satisfy the topological requirements of a linear model, which significantly improves the predictive power of the trained linear model, removes the features that may otherwise introduce the illusion of understanding \cite{IllusionsPaper}, and suggests the subsets of predictors where a non-linear or a higher-dimensional-manifold model would be more appropriate.

The paper is structured as follows. In \Cref{sec:PARCUR} we define Fundamentally OverParameterized (FOP) datasets, MLR, PARametric hyper-CURve (PARCUR) and IR models, prove their equivalence and identify the condition for the exact prediction of a test dataset. In \Cref{sec:ModelErrors} we study the behavior of the polynomial IR model applied to a dataset which contains both the polynomial predictors as well as predictors that have a non-functional relation to the dependent variable. We establish conditions under which such a dataset is a FOP dataset. In \Cref{sec:Regularization} we consider noisy data and introduce a polynomial degree truncation regularization scheme that can handle the noise in both the dependent and predictor variables. In \Cref{sec:Features} we propose a novel predictor removal algorithm that does not suffer from the ambiguities common to heuristic feature selection methods. In \Cref{sec:yarn} we apply the regularized IR model with predictor removal to the widely available experimental chemometric Yarn dataset \cite{plsRpackage}, \cite{YarnPaper}, and demonstrate the presence of both curve-like and higher-dimensional manifolds in this dataset. Finally, we present our conclusions and discuss the possible extensions of the PARCUR and IR models.

\section{Parametric Hyper-Curve and Inverse Regression models}
\label{sec:PARCUR}
In this and the following section we focus on the mathematical properties of the model and all data is assumed to be exact. Data with additive noise will be considered in \cref{sec:Regularization}. The standard multiple regression model expresses the dependent variable $y$ as a function of $p$ predictor variables $x_{j}$, $j=1,\dots,p$, i.e.,
\begin{align}
\label{eq:ForwardModel}
y = f(x_{1},\dots,x_{p}),\;\;\;f: {\mathbb R}^{p}\rightarrow{\mathbb R}.
\end{align}
In particular, the MLR model,
\begin{align}
\label{eq:MLRM}
y = \sum_{j=1}^{p}\beta_{j}x_{j},
\end{align}
obviously, belongs to this class of models. Topologically, the MLR equation (\ref{eq:MLRM}) describes a $p$-dimensional linear object, a hyper-plane, in a $(p+1)$-dimensional space. 

Now, consider the model of the form:
\begin{align}
\label{eq:ParametricCurveModel}
\begin{split}
y & = x_{0}(s), 
\\
x_{j} & = x_{j}(s),\;\;\;j=1,\dots,p;
\\
s&\in[a,b]\subset{\mathbb R},
\end{split}
\end{align}
which simply states that all data are considered to be the functions of some scalar parameter $s$.
The equations (\ref{eq:ParametricCurveModel}), describe a parametric hyper-curve in a $(p+1)$-dimensional space, essentially, a one-dimensional object. We shall call this model the PARametric hyper-CURve (PARCUR) model.

Under a monotone transformation of variables, the PARCUR model is equivalent to the Inverse Regression (IR) model:
\begin{align}
\label{eq:IRM}
\begin{split}
x_{j} & = x_{j}(y),\;\;\;j=1,\dots,p.
\end{split}
\end{align}
In the inverse relation (\ref{eq:IRM}) the independent variables $x_{j}$, $j=1,\dots,p$, are considered to be the functions of the dependent variable $y$. This model naturally emerges in the context of calibration problems \cite{Calibration1994}. It is also the most easily interpretable version of the PARCUR model as the functions $x_{j}(y)$ provide an insight into the change of each individual predictor variable as a function of the dependent variable $y$, if such a functional dependence exists.

Obviously, the general model (\ref{eq:ForwardModel}) and the IR model (\ref{eq:IRM}) are completely equivalent only under very stringent constraints on the function $f$. Specifically, for a complete equivalence the function $f(x_{1},\dots,x_{p})$ as well as all individual functions $x_{j}(y)$ should be invertible. While an inverse of the MLR model (\ref{eq:MLRM}) in the form (\ref{eq:IRM}) may exist on certain subsets of ${\mathbb R}^{p}$, a general nonlinear function $f$ in (\ref{eq:ForwardModel}) is not invertible and neither is a general IR or PARCUR model. Yet, the predictive power of both the MLR and the PARCUR models trained on a finite training dataset of a certain general type appears to be the same even when they are not mutually invertible.

Our main results, expressed in \cref{thm:ExactPredictionMLRM} and \cref{thm:Equivalence}, are the conditions on the exact prediction by the MLR model and the equivalence of the predictions made by the MLR and PARCUR models for what we call a FOP dataset. This equivalence allows to analyze the different types of column functions $x_{j}(s)$, $j=1,\dots,p$, and establish the conditions on the existence of a FOP dataset for different types of predictor data.

In practice, any regression model is trained on a finite discrete dataset. An overparameterized dataset arises whenever the number $n$ of training samples is smaller than the number $p$ of parameters or predictors. This can happen simply due to the lack of experimental data and additional experiments could transform an incidentally overparameterized dataset into a well-defined or even an underparameterized one. However, in the present paper, we shall focus on the more fundamental case, where the simple addition of the training data does not change the overparameterized nature of the dataset. To do so, we imagine that our dataset is potentially infinite, i.e., one can draw or generate data from this dataset as many times as one wishes, e.g., by simply performing the measurements of the same physical quantities over and over again.

\begin{definition}
\label{def:Dataset}
The dataset
\begin{align*}
    {\mathcal S}=\{(y_i, x_{i,1},\dots,x_{i,p} ), \; y_i,x_{i,j} \in \mathbb{R}|\; i=1, \dots; j=1,\dots,p\}
\end{align*}
is a {\it fundamentally overparameterized} (in the linear sense) dataset of rank $q$, if, for any $m$, the data-matrix $S_{m}\in{\mathbb R}^{m\times (p+1)}$ with its rows from ${\mathcal S}$, i.e.,
\begin{align*}
S_{m}&= [{\bm y}, X],\;\;\;
{\bm y} = 
\begin{bmatrix}
y_{1}\\
\vdots\\
y_{m}
\end{bmatrix},
\;\;\;
X = 
\begin{bmatrix}
x_{1,1}& \dots & x_{1,p}\\
\vdots & \ddots & \vdots\\
x_{m,1}& \dots & x_{m,p}
\end{bmatrix},
\end{align*}
has the property: 
\begin{align}
\label{eq:Overparameterized}
{\rm rank}(S_{m}) \leq q \leq p.
\end{align}
A {\it training} dataset is any fixed-size data matrix $S_{n}\in{\mathbb R}^{n\times(p+1)}$ with the rows from the FOP dataset ${\mathcal S}$. A training dataset is {\it complete} if ${\rm rank}(S_{n})=q$.  The complement dataset ${\mathcal S}_{\rm t} = {\mathcal S}\setminus {\mathcal S}_{n}$, is called the {\it test} dataset.
\end{definition}

In practice we are dealing with arbitrary but fixed-size testing datasets as well. To summarize, the dependent-variable data from $S_{n}$ and ${\mathcal S}_{\rm t}$ will be stored in the data-vectors ${\bm y}\in{\mathbb R}^{n}$ and ${\bm y}_{\rm t}\in{\mathbb R}^{m}$, and the corresponding predictor-variable data will be stored in the data-matrices $X\in{\mathbb R}^{n\times p}$ and $X_{\rm t}\in{\mathbb R}^{m\times p}$, respectively.

An overparameterized MLR model (\ref{eq:MLRM}) corresponds to an underdetermined linear algebraic system $X{\bm \beta}={\bm y}$, which is usually solved in the minimum-norm least-squares sense as $\hat{\bm \beta} = X^{T}(XX^{T})^{-1}{\bm y}$, where we have assumed that ${\rm rank}(X)=n$. Then, such a `trained' MLR model will make the following predictions for the testing dataset:
\begin{align}
\label{eq:YandXPredictionMLRM}
\begin{split}
\hat{\bm y}_{\rm t} & = X_{\rm t}X^{T}(XX^{T})^{-1}{\bm y},
\\
\hat{X}_{\rm t} & = X_{\rm t}X^{T}(XX^{T})^{-1}X.
\end{split}
\end{align}
Both ${\bm y}$ and $X$ are considered to be given in the training dataset $S_{n}$, whereas only the predictor data-matrix $X_{\rm t}$ is considered to be given in the testing dataset ${\mathcal S}_{\rm t}$. Hence, strictly speaking, the prediction $\hat{X}_{\rm t}$ of the given matrix $X_{\rm t}$ is not necessary. However, the fact that it represents a projection of the rows of $X_{\rm t}$ on the row space of the training matrix $X$ will be used in our subsequent analysis. Moreover, the two predictions (\ref{eq:YandXPredictionMLRM}) can now be written as the prediction $\hat{S}_{\rm t}$ of the testing dataset matrix $S_{\rm t}$:
\begin{align}
\label{eq:SPredictionMLRM}
\hat{S}_{\rm t} = X_{\rm t}X^{T}(XX^{T})^{-1}S_{n}.
\end{align}

The following \Cref{def:ColumnFunctionSpace} establishes the connection between the continuous world of predictor variables as described by the predictor functions and the discrete world of datasets, i.e., the columns of the data matrices.
\begin{definition}
\label{def:ColumnFunctionSpace}
A linear vector space ${\mathcal V}_{n}$, with ${\rm dim}({\mathcal V}_{n})=n$, is called a {\it column function space} of the dataset ${\mathcal S}$ of rank $n$ if there exist $p+1$ {\it column functions} $x_{j}(s)\in {\mathcal V}_{n}$, $x_{j}: {\mathbb R}\rightarrow{\mathbb R}$, $j=0,\dots p$, such that:
\begin{align}
\label{eq:ColumnFunctions}
\begin{split}
y_{i} &= x_{0}(s_{i}),
\\
x_{i,j} &= x_{j}(s_{i}),\;\;j=1,\dots,p,
\end{split}
\end{align}
for any data-point $(y_{i},x_{i,1},\dots,x_{i,p})$ from ${\mathcal S}$.
\end{definition}

Notice that we use the same notation for the predictor data $x_{i,j}$ and the column functions $x_{j}(s)$, with the relation being $x_{i,j}=x_{j}(s_{i})$. Also, obviously, $s=y$ in the IR model (\ref{eq:IRM}).

Let $\{v_{k}:{\mathbb R}\rightarrow{\mathbb R}\vert\,k=1,\dots,n\}$ be a basis of the column function space ${\mathcal V}_{n}$ of the dataset ${\mathcal S}$ of rank $n$. Then, each column function $x_{j}(s)$ can be expanded as:
\begin{align}
\label{eq:ColumnFunctionExpansion}
x_{j}(s) = \sum_{k=1}^{n}a_{k,j}v_{k}(s).
\end{align}
The entries of the {\it basis matrix} $V\in{\mathbb R}^{n\times n}$ are the sampled basis functions $v_{k}(s_{i})$, $k=1,\dots,n$, $i=1,\dots,n$:
\begin{align}
\label{eq:BasisMatrix}
V = 
\begin{bmatrix}
v_{1}(s_{1}) & \dots & v_{n}(s_{1})\\
\vdots & \ddots & \vdots \\
v_{1}(s_{n}) & \dots & v_{n}(s_{n})
\end{bmatrix}.
\end{align}
If $V$ is invertible, then the data-vector ${\bm y}$ and the data-matrix $X$ of the training set ${S}_{n}$ can be decomposed as follows:
\begin{align}
\label{eq:Xdecomposition}
X = VA,\;\;\;{\bm y} = V{\bm a}_{0},
\end{align}
where the elements $[A]_{k,j}=a_{k,j}$ of the matrix $A\in{\mathbb R}^{n\times p}$ and the elements $[{\bm a}_{0}]_{k}=a_{k,0}$ of the vector ${\bm a}_{0}\in{\mathbb R}^{n}$ are the expansion coefficients from (\ref{eq:ColumnFunctionExpansion}).

Formally, the PARCUR models (\ref{eq:ParametricCurveModel}) and (\ref{eq:IRM}) can be trained by computing the matrix $A$ and the vector ${\bm a}_{0}$ as:
\begin{align}
\label{eq:TrainingIRM}
A = V^{-1}X,\;\;\;{\bm a}_{0} = V^{-1}{\bm y}.
\end{align}
To predict the test dataset ${\mathcal S}_{\rm t}$, one has to solve the following minimization problem:
\begin{align}
\label{eq:MinimizationIRM}
\hat{V}_{\rm t} = {\rm arg}\min_{V_{\rm t}\in{\mathbb R}^{m\times n}}\Vert X_{\rm t} - V_{\rm t}A \Vert^{2}_{2},
\end{align}
where $X_{\rm t}$ is the given predictor test data-matrix. If ${\rm rank}(A)={\rm rank}(S_{n})=n$, the solution of the problem (\ref{eq:MinimizationIRM}) is given by:
\begin{align}
\label{eq:LSVt}
\hat{V}_{\rm t} = X_{\rm t}A^{T}(AA^{T})^{-1}.
\end{align}
Then, applying the relations (\ref{eq:Xdecomposition}), the predictions of the PARCUR model can be written as follows:
\begin{align}
\label{eq:YandXPredictionIRM}
\begin{split}
\hat{\bm y}_{\rm t} & = \hat{V}_{\rm t}{\bm a}_{0} = X_{\rm t}A^{T}(AA^{T})^{-1}{\bm a}_{0},
\\
\hat{X}_{\rm t} & = \hat{V}_{\rm t}A = X_{\rm t}A^{T}(AA^{T})^{-1}A.
\end{split}
\end{align}
This can also be combined into the prediction $\hat{S}_{\rm t}$ of the testing dataset matrix $S_{\rm t}$:
\begin{align}
\label{eq:SPredictionIRM}
\hat{S}_{\rm t} = X_{\rm t}A^{T}(AA^{T})^{-1}A_{n},
\end{align}
where $A_{n}=[{\bm a}_{0}, A]$ and, from (\ref{eq:Xdecomposition}), $S_{n}=VA_{n}$.

The following \Cref{thm:ExactPredictionMLRM} establishes the condition under which the prediction made by the MLR is going to be exact.
\begin{theorem}
\label{thm:ExactPredictionMLRM}
The prediction $\hat{S}_{\rm t}$ produced by the MLR model is exact for any data matrix $S_{\rm t}$ from the fundamentally overparameterized dataset ${\mathcal S}$ of rank $q$ if and only if the training dataset ${S}_{q}$ is complete and ${\rm rank}(X)=q$.
\end{theorem}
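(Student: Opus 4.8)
The plan is to prove the two implications separately, in both cases using that the content of \Cref{def:Dataset} is a statement about row spaces: every finite collection of rows drawn from ${\mathcal S}$ spans a subspace of ${\mathbb R}^{p+1}$ of dimension at most $q$.

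For the \emph{if} direction, suppose the training set ${S}_{q}$ is complete, so ${\rm rank}(S_{q})=q$, and ${\rm rank}(X)=q$, so that $XX^{T}$ is invertible and (\ref{eq:SPredictionMLRM}) applies with $n=q$. Given any test matrix $S_{\rm t}=[{\bm y}_{\rm t},X_{\rm t}]\in{\mathbb R}^{m\times(p+1)}$ whose rows come from ${\mathcal S}$, the first step is to show $S_{\rm t}=CS_{q}$ for some $C\in{\mathbb R}^{m\times q}$. Indeed, stacking any single row of $S_{\rm t}$ under $S_{q}$ gives a matrix whose rows all lie in ${\mathcal S}$, hence of rank at most $q$ by (\ref{eq:Overparameterized}); since $S_{q}$ already has rank $q$, that row lies in the row span of $S_{q}$, and collecting these relations over all rows of $S_{\rm t}$ gives $S_{\rm t}=CS_{q}$, equivalently ${\bm y}_{\rm t}=C{\bm y}$ and $X_{\rm t}=CX$. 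Substituting $X_{\rm t}=CX$ into (\ref{eq:SPredictionMLRM}) and using $XX^{T}(XX^{T})^{-1}=I$ collapses the prediction to $\hat S_{\rm t}=CXX^{T}(XX^{T})^{-1}S_{q}=CS_{q}=S_{\rm t}$, which is the asserted exactness.

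For the \emph{only if} direction, assume $\hat S_{\rm t}=S_{\rm t}$ for every test matrix from ${\mathcal S}$ and reverse the argument. Using (\ref{eq:SPredictionMLRM}) already presupposes ${\rm rank}(X)=n$, so $X^{T}(XX^{T})^{-1}$ is a right inverse of $X$. Apply exactness to a single test row $(y^{t},{\bm x}^{t})$: the $X$-block of (\ref{eq:SPredictionMLRM}) reads ${\bm x}^{t}X^{T}(XX^{T})^{-1}X={\bm x}^{t}$, so ${\bm x}^{t}$ is in the row space of $X$, say ${\bm x}^{t}={\bm c}^{T}X$; then ${\bm x}^{t}X^{T}(XX^{T})^{-1}={\bm c}^{T}$, the ${\bm y}$-block gives $y^{t}={\bm c}^{T}{\bm y}$, and therefore $(y^{t},{\bm x}^{t})={\bm c}^{T}S_{n}$ lies in the row span of $S_{n}$. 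Hence every row of ${\mathcal S}$ lies in a space of dimension ${\rm rank}(S_{n})\le n$; since ${\mathcal S}$ has rank $q$ it possesses $q$ linearly independent rows, forcing ${\rm rank}(S_{n})\ge q$, and together with (\ref{eq:Overparameterized}) this gives ${\rm rank}(S_{n})=q$, i.e.\ $S_{n}$ is complete. Finally $n={\rm rank}(X)\le{\rm rank}(S_{n})=q\le n$ forces $n=q$ and ${\rm rank}(X)=q$.

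The linear-algebra bookkeeping is routine; the one step that carries the weight is the passage from ``$S_{\rm t}$ has its rows in ${\mathcal S}$'' to ``$S_{\rm t}=CS_{q}$'' (and its converse), which is exactly where the overparameterization bound (\ref{eq:Overparameterized}) and the completeness of $S_{q}$ are both used, together with the tacit fact that a dataset of rank $q$ really contains $q$ independent rows. I do not expect a genuine obstacle beyond making sure the prediction formula (\ref{eq:SPredictionMLRM}) is legitimately applicable, which is precisely the hypothesis ${\rm rank}(X)=q$.
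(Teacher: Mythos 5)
Your argument is correct and takes essentially the same route as the paper: the forward direction is the identical computation ($S_{\rm t}=CS_{q}$ implies $\hat S_{\rm t}=CXX^{T}(XX^{T})^{-1}S_{q}=CS_{q}$), with the existence of $C$ justified by the same rank bound from \Cref{def:Dataset} that the paper invokes when asserting its matrix $U$. Your converse is just the paper's one-sentence contrapositive (a row outside the row span of $S_{q}$ cannot equal a prediction that by construction lies in that span) carried out in the direct form and completed to recover ${\rm rank}(S_{n})=q$ and $n=q$; there is no gap.
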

\begin{proof} Let $S_{q}=[{\bm y},X]$ be a complete training set of the FOP dataset ${\mathcal S}$. By \cref{def:Dataset}, for any test set ${\mathcal S}_{\rm t}\subset {\mathcal S}$ with the data matrix $S_{\rm t}=[{\bm y}_{\rm t}, X_{\rm t}]$ there exists the matrix $U$ such that $S_{\rm t}=US_{q}$ and $X_{t}=UX$. Hence, from (\ref{eq:SPredictionMLRM}),
\begin{align}
\label{eq:eq:ProofExact1}
\hat{S}_{\rm t} = X_{\rm t}X^{T}(XX^{T})^{-1}S_{q} = UXX^{T}(XX^{T})^{-1}S_{q} = US_{q}=S_{\rm t}.
\end{align}
If the training set ${S}_{q}$ is not complete, then there exists a data row ${\bm s}_{\rm t}\in {\mathcal S}$ that is not a linear combination of the rows of $S_{q}$.
\end{proof}

Whether the training dataset is compete or not, the MLR and the PARCUR models are equivalent in the following sense.

\begin{theorem}
\label{thm:Equivalence}
Let ${S}_{n}=[{\bm y},X]$, ${\rm rank}(X)=n$, and ${\mathcal S}_{\rm t}=[{\bm y}_{\rm t},X_{\rm t}]$ be a training and a testing datasets of the fundamentally overparameterized dataset ${\mathcal S}$ of rank $q$, $n\leq q$. Let also ${\mathcal V}_{q}$ be the column function space of ${\mathcal S}$. Then, for any basis $\{v_{j}\}$ in ${\mathcal V}_{q}$ with the invertible basis matrix $V\in{\mathbb R}^{n\times n}$, the predictions of the testing dataset (\ref{eq:YandXPredictionMLRM}) and (\ref{eq:YandXPredictionIRM}) made, respectively, by the MLR and the PARCUR models are equal.
\end{theorem}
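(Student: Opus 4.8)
The plan is to prove the equality by a direct change of variables, turning the PARCUR prediction formulas (\ref{eq:YandXPredictionIRM}) into the MLR formulas (\ref{eq:YandXPredictionMLRM}). Since the basis matrix $V\in{\mathbb R}^{n\times n}$ is invertible, I would take the PARCUR model as trained through (\ref{eq:TrainingIRM}), i.e. $A=V^{-1}X$ and ${\bm a}_{0}=V^{-1}{\bm y}$, so that $X=VA$ and ${\bm y}=V{\bm a}_{0}$ exactly as in (\ref{eq:Xdecomposition}). First I would check that all the inverses appearing in the two prediction formulas actually exist: ${\rm rank}(X)=n$ gives invertibility of $XX^{T}$, and because $V$ is invertible it also gives ${\rm rank}(A)={\rm rank}(V^{-1}X)=n$, hence invertibility of $AA^{T}$ and the validity of (\ref{eq:LSVt}).

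The one computation that does the work is the simplification of $\hat V_{\rm t}$. Substituting $A=V^{-1}X$ gives $A^{T}=X^{T}(V^{T})^{-1}$ and $AA^{T}=V^{-1}XX^{T}(V^{T})^{-1}$, whence $(AA^{T})^{-1}=V^{T}(XX^{T})^{-1}V$, so that
\[
\hat V_{\rm t}=X_{\rm t}A^{T}(AA^{T})^{-1}=X_{\rm t}X^{T}(V^{T})^{-1}V^{T}(XX^{T})^{-1}V=X_{\rm t}X^{T}(XX^{T})^{-1}V .
\]
From here both predictions follow by one more multiplication: $\hat{\bm y}_{\rm t}=\hat V_{\rm t}{\bm a}_{0}=X_{\rm t}X^{T}(XX^{T})^{-1}VV^{-1}{\bm y}=X_{\rm t}X^{T}(XX^{T})^{-1}{\bm y}$ and $\hat X_{\rm t}=\hat V_{\rm t}A=X_{\rm t}X^{T}(XX^{T})^{-1}VV^{-1}X=X_{\rm t}X^{T}(XX^{T})^{-1}X$, which are precisely the MLR predictions (\ref{eq:YandXPredictionMLRM}). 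Because the right-hand sides contain no trace of $V$, the equality is automatically independent of the chosen basis $\{v_{j}\}$, which is the remaining assertion of the theorem.

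There is no serious obstacle here: the statement reduces to the algebraic identities $(V^{T})^{-1}V^{T}=I$ and $VV^{-1}=I$, i.e. to the cancellation of the (invertible) basis matrix. The only point that genuinely requires attention — and which I would spell out before the computation — is that the hypotheses are exactly what makes the change of variables legitimate: invertibility of $V$ is needed both to pass between $(X,{\bm y})$ and $(A,{\bm a}_{0})$ via (\ref{eq:TrainingIRM}) and to carry the rank condition ${\rm rank}(X)=n$ over to $A$. The larger ambient dimension $q\ge n$ of ${\mathcal V}_{q}$ plays no role, because $X=VA$ holds by the very definition $A=V^{-1}X$ whatever $q$ is; and, in contrast to \cref{thm:ExactPredictionMLRM}, completeness of the training set is irrelevant for this equivalence (the predictions of the two models coincide even when neither is exact).
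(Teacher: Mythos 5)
Your proof is correct and follows exactly the route the paper takes: the paper's own proof is the one-line instruction to substitute (\ref{eq:TrainingIRM}) into (\ref{eq:YandXPredictionIRM}), and your computation simply carries out that substitution and the resulting cancellation of $V$, with the invertibility checks properly spelled out.
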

\begin{proof} The proof follows from substituting the relations (\ref{eq:TrainingIRM}) into (\ref{eq:YandXPredictionIRM}).
\end{proof}

If the training dataset is incomplete and the rows of the testing data-matrix $S_{\rm t}$ are not (yet) in the span of the rows of the training data-matrix $S_{n}$, there will be an error in the predictions made by the MLR and PARCUR models. Let $S_{\rm t}$ be decomposed as
\begin{align}
\label{eq:StDecomposion}
S_{\rm t} = US_{n}+S_{\rm t}^{\perp},\;\;\;S_{\rm t}^{\perp}S_{n}^{T} = {\bm y}_{\rm t}^{\perp}{\bm y}^{T}+X_{\rm t}^{\perp}X^{T}=0_{k,n},
\end{align}
where $S_{\rm t}^{\perp}=[{\bm y}_{\rm t}^{\perp}, X_{\rm t}^{\perp}]$, and $0_{k,n}\in{\mathbb R}^{k\times n}$ is the matrix of all zeros. The existence of this decomposition stems from the fact that ${\rm rank}(S_{n})=n$, i.e., $S_{n}S_{n}^{T}$ is invertible. 

Then, $X_{\rm t} = UX+X_{\rm t}^{\perp}$, and the prediction error will be:
\begin{align}
\label{eq:IncompleteSnPredictionError}
\begin{split}
S_{\rm t}-\hat{S}_{\rm t} & = US_{n}+S_{\rm t}^{\perp} - (UX+X_{\rm t}^{\perp})X^{T}(XX^{T})^{-1}S_{n}
\\
&=S_{\rm t}^{\perp}+X_{\rm t}^{\perp}X^{T}(XX^{T})^{-1}S_{n} = S_{\rm t}^{\perp}-{\bm y}_{\rm t}^{\perp}{\bm y}^{T}(XX^{T})^{-1}S_{n}
\\
&=\left[{\bm y}_{\rm t}^{\perp}-{\bm y}_{\rm t}^{\perp}{\bm y}^{T}(XX^{T})^{-1}{\bm y},\;X_{\rm t}^{\perp}-{\bm y}_{\rm t}^{\perp}{\bm y}^{T}(XX^{T})^{-1}X\right]
.
\end{split}
\end{align}
Hence, apart from the obvious case ${\bm y}_{\rm t}^{\perp}={\bm 0}$, the prediction of the dependent variable will also be exact if ${\bm y}^{T}(XX^{T})^{-1}{\bm y} = 1$.

Technically, the difference between the MLR and PARCUR models can be expressed as a simple transformation of the columns of the training dataset $S_{n}$ by the basis matrix $V$. The relative advantage of the IR version, $s=y$, of the PARCUR model over other choices of the parameter $s$ stems from the direct interpretation of the coefficient matrix $A$ since it provides an insight into the dependence of each predictor variable on the dependent variable. The magnitude and the sign of the coefficients in $A$ reflect the type (e.g., linear, quadratic, etc) and the strength of these dependencies. It is also clear why interpreting the ${\bm \beta}$ vector of the MLR model might be more problematic, as its relation to the coefficient matrix, $\hat{\bm \beta} = A^{T}(AA^{T})^{-1}{\bm a}_{0}$, is rather convoluted. In the standard MLR formulation (\ref{eq:MLRM}), a component $\hat{\beta}_{j}$ of the vector $\hat{\bm \beta}$ is interpreted as the weight of the additive contribution by the corresponding predictor $x_{j}$. However, it does not further specify the nature of the mathematical relation between $y$ and $x_{j}$.

It is also possible, and sometimes profitable, to train the PARCUR model on an {\it overcomplete} overparameterized training data set $S_{n}\in{\mathbb R}^{n\times{(p+1)}}$, where ${\rm rank}(S_{n})=q\leq p$, but $n>q$. However, since the basis matrix $V\in{\mathbb R}^{n\times q}$, ${\rm rank}(V)=q$, is now singular, one has to resort to the Ordinary Least-Squares (OLS) solution of the training problem:
\begin{align}
\label{eq:LSTraining}
\hat{A}_{n} = (V^{T}V)^{-1}V^{T}S_{n}.
\end{align}
In that case, $\hat{A}=(V^{T}V)^{-1}V^{T}X$, and the prediction of the test set makes use of these OLS estimates as $\hat{S}_{\rm t} = X_{\rm t}\hat{A}^{T}(\hat{A}\hat{A}^{T})^{-1}\hat{A}_{n}$. Although, the equivalence to the MLR model is only achieved if the chosen basis matrix $V$ coincides with the matrix of the left singular vectors of~$X$.

\section{Data containing polynomial column functions}
\label{sec:ModelErrors}
\Cref{thm:ExactPredictionMLRM} shows that the prediction by the MLR model is exact if the training set is complete. From the \Cref{def:Dataset} it is clear that a complete dataset is a subset of a FOP dataset, such that its data matrix has the maximal possible rank $q$, which can be smaller than the number of predictors $p$. Given a finite training dataset of size $n$ it is hard to tell if: a) it is a subset of a FOP dataset with some $q<p$, b) it is a complete dataset, i.e. $n=q$. In this section, working under the assumption that the data set ${\mathcal S}_{m}$ does not contain statistical noise, we establish the sufficient condition for the existence of a FOP dataset. To formulate these conditions it is convenient to make a choice of the column function space, see \Cref{def:ColumnFunctionSpace}. Here, we consider the polynomial function space and the {\it monomial} basis, which lead to easily interpretable regression results. 

We note a well-known fact that the monomial basis $\{v_{k}(y)=y^{k}\,\vert\, k=0,\dots,n-1\}$ for the column function space ${\mathcal V}_{n}$ will produce an invertible Vandermonde basis matrix $V$, given by
\begin{align}
\label{eq:Vandermonde}
V = 
\begin{bmatrix}
1 & y_{1} & y_{1}^{2}&\dots& y_{1}^{n-1}\\
1 & y_{2} & y_{2}^{2}&\dots& y_{2}^{n-1}\\
\vdots & \vdots & \cdots&\vdots\\
1 & y_{n} & y_{n}^{2}&\dots& y_{n}^{n-1}\\
\end{bmatrix},
\end{align}
if all entries of the dependent variable data-vector ${\bm y}=[y_{1},\dots,y_{n}]^{T}$ are distinct. In this basis, the entries $a_{i,j}$ of the coefficient matrix $A$ represent the coefficients of the polynomial functions $x_{j}(y)$ that may generate some of the columns of the predictor data-matrix $X$:
\begin{align}
\label{eq:PolynomialX}
x_{j}(y) = a_{1,j}+a_{2,j}y+\dots+a_{n,j}y^{n-1}.
\end{align}
This also puts the PARCUR model into the context of polynomial fitting. Whether the columns of $X$ have or have not been generated by polynomial functions of $y$, the IR model with the basis matrix (\ref{eq:Vandermonde}) will be projecting all columns of $X$ on the monomial basis. From the computational point of view, Vandermonde matrices, while theoretically invertible, are hard to work with for sizes above $n=15$ and become the source of significant round-off errors. Luckily, one normally does not need polynomial functions of very high degree to adequately describe a data column. To minimize the numerical errors, we also normalize the range of the $y$-data to fit within the interval $[-1,1]$.

In general, it is difficult to decide whether the training dataset is complete by simply inspecting the entries of its data matrix $S_{n}$. In theory, one could compute the Singular-Value Decomposition (SVD) of the matrix and see if the zero singular value appears after adding any new sample (row) to the training dataset. However, here we are interested in an arbitrary column function space ${\mathcal V}_{q}$ with the square invertible basis matrix $V\in{\mathbb R}^{n\times n}$, and the conclusions about the eventual completeness of ${S}_{n}$ will be based on the shape of the corresponding coefficient matrix $A_{n}=V^{-1}S_{n}$. 

Each of the $p+1$ data columns $x_{j}$ in a dataset ${\mathcal S}$ belongs to one of the three classes: 
\begin{enumerate}
\item{$x_{j}(y)$ is a polynomial in $y$ of degree less or equal to $n-1$}
\item{$x_{j}(y)$ is a polynomial in $y$ of degree higher than $n-1$}
\item{there is a non-functional dependence between $x_{j}$ and $y$}
\end{enumerate}
In particular, the first column $x_{0} = y$, i.e., the dependent variable, obviously, belongs to the first class with $n=2$.

A non-functional dependence between the predictor $x_{j}$ and the dependent variable $y$ would emerge if the data was situated on a curve that cannot be parameterized by $y$, \Cref{fig:nn-func-3d-example} (left). Choosing a different parameterization could transform these 'nonfunctional' data to functions $y(s)$, $x_{j}(s)$, $j=1,\dots,p$, as in the general PARCUR formulation. A more severe case of non-functional dependence arises where the data is situated on a higher-dimensional manifold, such as a hyper-surface, \Cref{fig:nn-func-3d-example} (right), and no alternative parameterization can fix this problem. The column data that one observes in such `nonfunctional' cases are illustrated in \Cref{fig:nn-func-3d-example} (bottom, two-dimensional scatter plots).
\begin{figure}
    \centering
    \hspace*{-0.8cm}
    \includegraphics[width=1.1\linewidth]{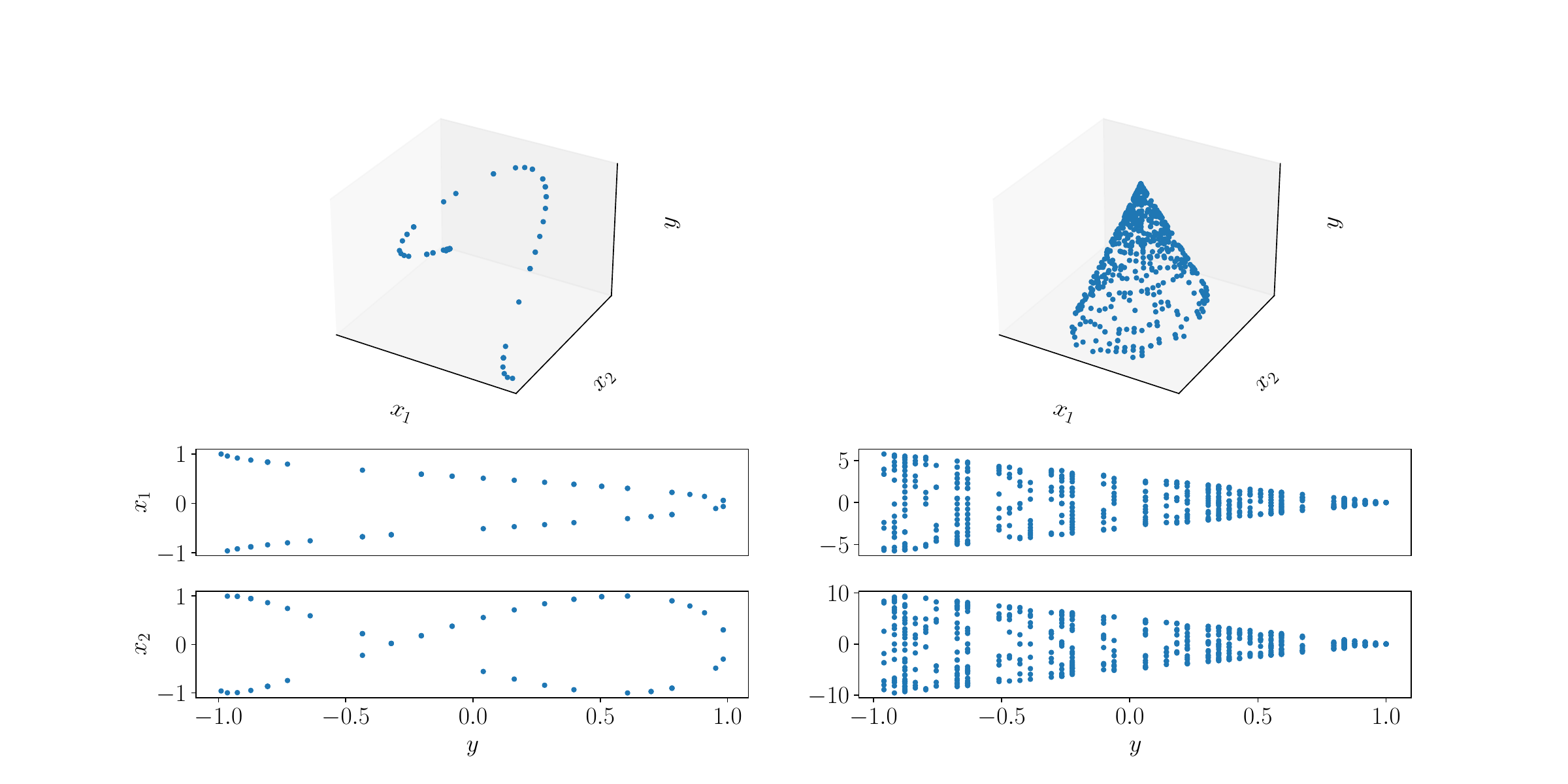}
    \caption{Examples of column data produced by non-functional relationships between the dependent variable $y$ and the predictor variables $x_{1}$ and $x_{2}$. Left: data on a curve that cannot be parameterized by $y$. Right: data on a conical surface. Two-dimensional scatter-plots: $x_{1}$ and $x_{2}$ column data sorted by $y$ and displayed as `functions' of $y$.}
    \label{fig:nn-func-3d-example}
\end{figure}

\begin{theorem}
\label{thm:polynomialFOP}
Let ${\mathcal S}$ be a dataset with one independent variable $y$ and $p$ predictor variables $x_{j}$, $j=1,\dots,p$. Let also ${\mathcal S}$ contain $k\leq p$ predictors that are polynomials in $y$ of degrees greater than $q-1$, or have a non-functional relation to $y$. Then, ${\mathcal S}$ is a fundamentally overparameterized dataset (in linear sense) of rank $q$, $2\leq q\leq p$, if its remaining $p-k$ predictors are the polynomials in $y$ of degree $r$, $1\leq r\leq q-k-1$.
\end{theorem}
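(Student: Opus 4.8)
The plan is to verify \Cref{def:Dataset} head-on: I would fix an arbitrary $m$, assemble the data matrix $S_{m}=[{\bm y},X]\in{\mathbb R}^{m\times(p+1)}$ out of rows of ${\mathcal S}$, prove ${\rm rank}(S_{m})\le q$, and then invoke the hypothesis $q\le p$. The whole argument is a dimension count on the column space of $S_{m}$, split according to the three classes of columns described just before the theorem.

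First I would introduce the sampled monomials $v_{\ell}=[y_{1}^{\ell},\dots,y_{m}^{\ell}]^{T}\in{\mathbb R}^{m}$ for $\ell=0,1,\dots,q-k-1$ and set ${\mathcal M}={\rm span}\{v_{0},\dots,v_{q-k-1}\}$, so that $\dim{\mathcal M}\le q-k$ (it is the span of $q-k$ vectors, which may be dependent when $m<q-k$ --- that only helps). The dependent-variable column is exactly $v_{1}$, hence in ${\mathcal M}$, where I use $q-k-1\ge1$; this holds because the interval $1\le r\le q-k-1$ must be non-empty and there is at least one polynomial predictor. Each of the $p-k$ polynomial predictor columns is the sampling of some $x_{j}(y)=\sum_{\ell=0}^{r_{j}}a_{\ell,j}y^{\ell}$ with $r_{j}\le q-k-1$, hence equals $\sum_{\ell=0}^{r_{j}}a_{\ell,j}v_{\ell}\in{\mathcal M}$. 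Thus all $1+(p-k)$ of these columns lie in a single subspace of dimension at most $q-k$.

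Next I would dispose of the remaining $k$ columns --- the predictors that are polynomials in $y$ of degree exceeding $q-1$ or that have a non-functional relation to $y$ --- as a wholly unstructured block: any $k$ vectors in ${\mathbb R}^{m}$ span a subspace of dimension at most $k$, whatever their entries. Since every column of $S_{m}$ lies either in ${\mathcal M}$ or in this block, the column space of $S_{m}$ is contained in their sum, so ${\rm rank}(S_{m})\le\dim{\mathcal M}+k\le(q-k)+k=q\le p$. As $m$ was arbitrary, \Cref{def:Dataset} is met and ${\mathcal S}$ is a fundamentally overparameterized dataset of rank $q$.

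I do not expect a genuine obstacle; what remains is bookkeeping. One must check that the degree ceiling $q-k-1$ is exactly what makes $\dim{\mathcal M}\le q-k$, so that appending the $k$ leftover columns brings the total up to --- but not past --- the target $q$; one should recall that ``rank $q$'' in \Cref{def:Dataset} denotes an upper bound rather than an equality, so no extremal dataset need be produced; and one should note the degenerate ranges, in particular the implicit requirement $p-k\ge1$ (which, with the degree constraint, forces $p\ge q\ge k+2$) and the harmless case $m<q-k$. It is worth remarking, though not needed for the statement, that the bound is attained --- ${\rm rank}(S_{m})=q$ --- once the $y_{i}$ take at least $q$ distinct values, the polynomial predictors collectively realize the degrees $1,\dots,q-k-1$, and the $k$ leftover columns are in general position.
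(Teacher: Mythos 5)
Your proof is correct and is essentially the same dimension count as the paper's: the $p-k$ low-degree polynomial columns together with ${\bm y}$ lie in the span of the sampled monomials $1,y,\dots,y^{q-k-1}$ (dimension at most $q-k$), and the remaining $k$ columns contribute at most $k$ more, giving ${\rm rank}(S_{m})\leq q$. The only difference is cosmetic: the paper routes the argument through the factorization $S_{m}=VA_{m}$ and the block structure of the coefficient matrix, which requires assuming the Vandermonde matrix is invertible (distinct $y_{i}$), whereas your direct column-space version sidesteps that assumption.
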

\begin{proof}
Without the loss of generality we may assume that for any subset of size $m$ of the dataset ${\mathcal S}$, the dependent variable data ${\bm y}$ contains only the distinct values of $y$ so that the square monomial basis matrix $V\in{\mathbb R}^{m\times m}$ is invertible. Then, any data-matrix $S_{m}=[{\bm y},X]\in{\mathbb R}^{m\times(p+1)}$ can be represented as $S_{m}=VA_{m}$. By the conditions of the Theorem, for any $m>q$, subject to column reordering, the coefficient matrix $A_{m}\in{\mathbb R}^{m\times(p+1)}$ has the structure:
\begin{align}
A_{m}=
\begin{bmatrix}
{\bm e}_{2}&A_{1,1}& A_{1,2}\\
{\bm 0}& O&A_{2,2}
\end{bmatrix},\;\;\; A_{1,1}\in{\mathbb R}^{(q-k)\times(p-k)}, \;\;\;A_{1,2}\in{\mathbb R}^{(q-k)\times k}, \;\;\; A_{2,2}\in{\mathbb R}^{(m-q+k)\times k},
\end{align}
where $V^{-1}{\bm y}={\bm e}_{2}\in{\mathbb R}^{m}$ is the second standard basis vector and $O\in{\mathbb R}^{(m-q+k)\times(p-k)}$ is the matrix of all zeros. Here we have used the fact that the polynomial fit to non-functional data may produce a polynomial of degree greater than $q-1$. It is obvious that ${\rm rank}(A_{1,1})\leq (q-k)$ and, for any $m\geq (q-k)$, ${\rm rank}(A_{2,2})\leq k$. Therefore, ${\rm rank}(A) \leq q$, and ${\rm rank}(S_{m})\leq q$ for any $m$, showing that ${\mathcal S}$ is a FOP dataset.
\end{proof}

The above \Cref{thm:polynomialFOP} shows that having extremely high-degree polynomials, e.g., with degrees higher than $p-1$ and non-functional dependencies among the predictors does not prevent the MLR and the IR models from making exact predictions as long as there are also polynomial data of sufficiently low degree and the set on which the model is trained is complete. Moreover, a complete dataset can, in principle, be achieved with $n<p$ samples. The latter fact may seem surprising as it appears that we are able to recover a polynomial of degree higher than $n-1$ or a non-functional dependence by training on just $n$ data points. However, it becomes less surprising if we consider the form of the MLR and IR predictors given by (\ref{eq:YandXPredictionMLRM}) and (\ref{eq:YandXPredictionIRM}), as in both cases the leftmost matrix $X_{\rm t}$ contains the $X$-data from the test dataset which ones is trying to `predict'.

In the limiting case with $p$ high-degree polynomials and/or non-functional dependencies, a complete dataset will only be achieved with $n=p$ samples. It seems to be a waste of time and resources, though, to collect so much training data knowing that the majority of predictors do not even satisfy the model assumptions. We come back to this question in \Cref{sec:Features}. In the other limiting case, where all predictors are polynomials, the size of the complete training dataset can be as small as $n=2$ if the maximal degree of all polynomials is at most one and there is at least one predictor which is a linear function of $y$.

\Cref{fig:Error_Ex_X_y} (top) illustrates the performance of the IR model with the three classes of column data discussed above. Specifically, we are considering the cases where: all predictors are  polynomial functions in $y$ of degree $r-1\leq q$ (blue, circles), some of the predictors are high-degree polynomials in $y$ (orange, squares), and some of the predictors have non-functional relation to the dependent variable $y$ (green, triangles).
\begin{figure}[t]
    \centering
    \includegraphics[width=0.9\linewidth,   trim= {2cm 2.4cm 2cm 2cm},clip]{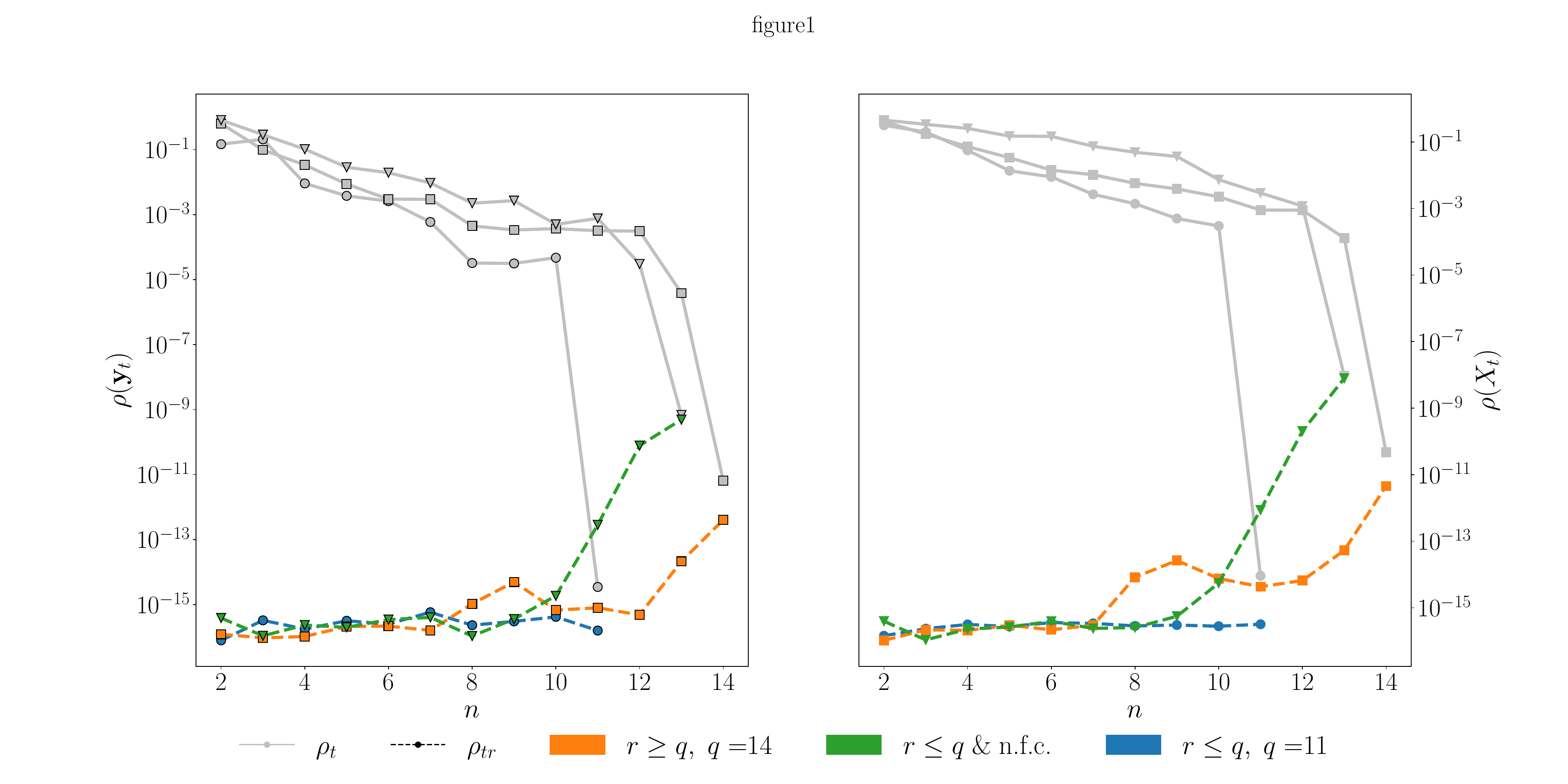}
    \includegraphics[width=0.9\linewidth,   trim= {2cm 0cm 2cm 2cm},clip]{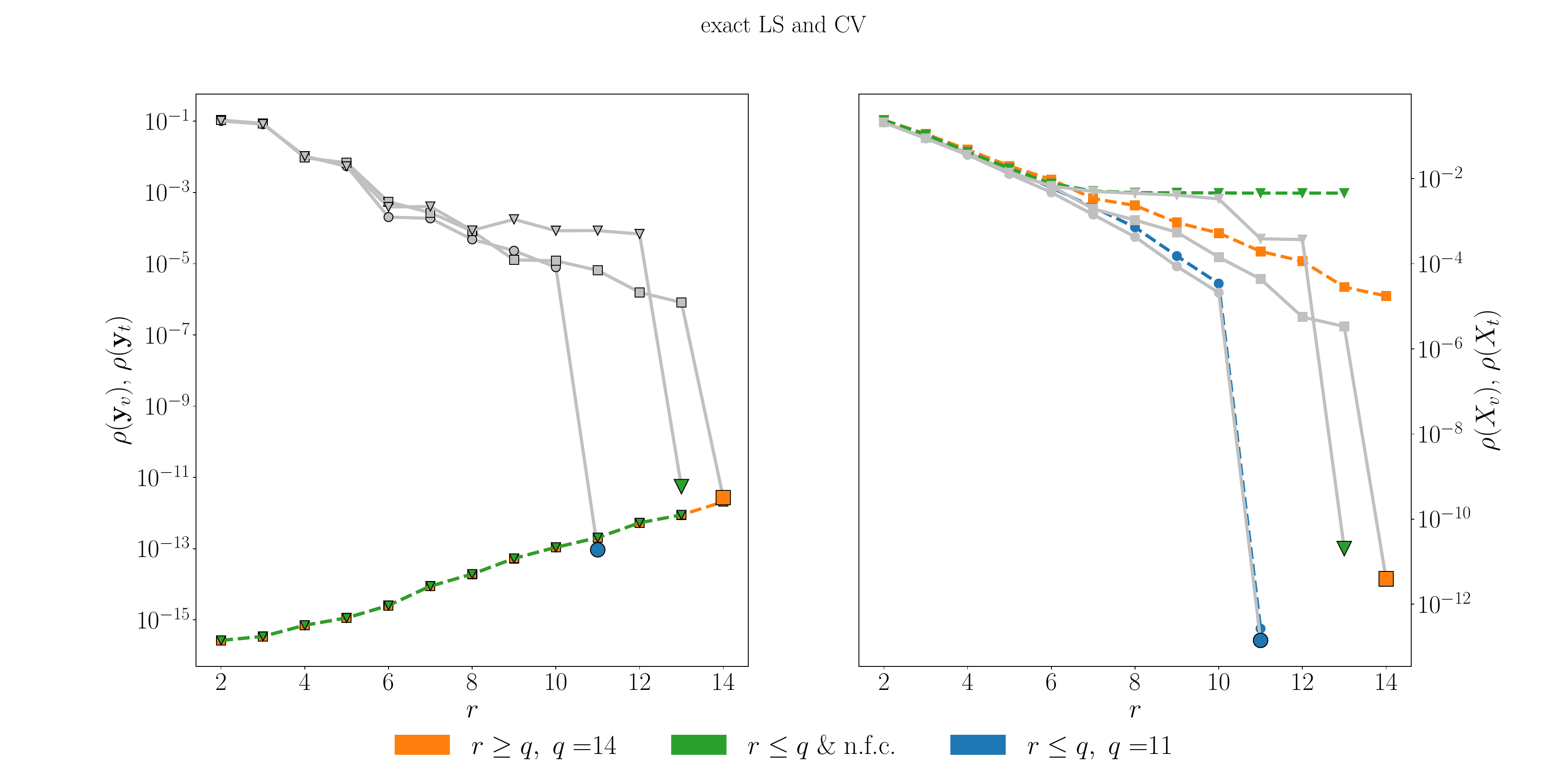}
    \caption{\small Drop in prediction errors of IR models when the training set becomes complete (noiseless data, left -- dependent variable ${\bm y}$, right -- independent predictor variables $X$). Colored lines -- training set errors, gray lines -- test set errors. Blue, circles: maximum polynomial degree of predictors is $11$. Orange, squares: maximum polynomial degree of predictors is $14$. Green, triangles: some of the predictors have non-functional dependence on the dependent variable.
    Top: square, invertible basis matrix $V$. Bottom: non-square basis matrix $V$ (over-complete training dataset) and a formal test of the polynomial truncation regularization algorithm with noiseless data. The identified optimal polynomial degrees $r^{*}$ coincide with the ranks of the corresponding complete training datasets (bright enlarged colored markers show the minima of validation errors).}
    \label{fig:Error_Ex_X_y}
\end{figure}

In these numerical experiments the data $x_{j}(y)$ are generated by randomly sampling the range of $y\in[-1,1]$ and evaluating polynomial functions of various degrees at the sampled points. Columns that are not functions of $y$ are generated as non-invertible functions $y(x_{j})$, similar to those in the examples of \Cref{fig:nn-func-3d-example}. The predictions are computed as in Eq.'s~(\ref{eq:YandXPredictionIRM}), where the coefficient matrix $A$ is obtained from the training $X$-data as in Eq.~(\ref{eq:TrainingIRM}).

The ranks of the complete datasets are: $q=11$ ($p=200$ polynomials of degree $r\leq 10$), $q=14$ ($p=203$, with $200$ polynomials of degree $r\leq 10$ and three polynomials of degree $r=17$), and $q=13$ ($p=202$, with $200$ polynomials of degree $r\leq 10$ and two non-functional predictors). In all three cases, we expect the prediction errors to vanish as soon as the rank of the training dataset reaches $q$.

The prediction errors in ${\bm y}$ and $X$ are measured on both the training and the test sets as follows:
\begin{align}
\label{rhot}
\begin{split}
    \rho({\bm y}_{\rm v})&=\frac{\Vert \bm y_{\rm v} - \hat{\bm y}_{\rm v}\Vert_2}{\Vert\bm y_{\rm v}\Vert_2},\;\;\;
    \rho(X_{\rm v})=\frac{\Vert X_{\rm v} - \hat{X}_{\rm v}\Vert_F}{\Vert X_{\rm v}\Vert_F},
\\
    \rho({\bm y}_{\rm t})&=\frac{\Vert \bm y_{\rm t} - \hat{\bm y}_{\rm t}\Vert_2}{\Vert\bm y_{\rm t}\Vert_2},\;\;\;
    \rho(X_{\rm t})=\frac{\Vert X_{\rm t} - \hat{X}_{\rm t}\Vert_F}{\Vert X_{\rm t}\Vert_F},
\end{split}
\end{align}
where ${\bm y}_{\rm v}$ and $X_{\rm v}$ are the training (validation) set data and ${\bm y}_{\rm t}$ and $X_{\rm t}$ are the test set data.

\Cref{fig:Error_Ex_X_y} shows the prediction errors as functions of the training set rank (top row), and of the polynomial representation degree (bottom row). The bright colored dashed lines give the training set errors and the dim solid gray lines show the corresponding errors on the test dataset. Plots on the left show the dependent variable errors and on the right -- the errors for the predictor variables. As expected, the errors drop as soon as the training dataset becomes complete. With the high-degree polynomial predictors and especially with non-functional predictors the errors on the training set begin to rise at the end and the errors on the test set do not drop to machine precision values as happens with the low-degree polynomial predictors. This is due to the fact that the coefficient matrices in the former two cases become ill-conditioned, making the predictors more sensitive to numerical round-off errors.

\section{Polynomial regularization}
\label{sec:Regularization}
All measured data are either random in nature or contain statistical noise. In the latter case, instead of the exact values $y_{i}$ of the dependent variable $y$, one usually measures the quantity $y_{i}+\epsilon_{i}$, where the `noise' $\epsilon_{i}$ is a single realization of a random variable $\epsilon$ with the distribution from some well-defined class, e.g., $\epsilon\sim{\mathcal N}(0,\sigma^{2})$, see e.g. \cite{Izenman2008}. The parameters of the distribution, such as the variance $\sigma^{2}$, are often unknown and estimated from the data. Being measured quantities, the independent variables $x_{j}$ may also be contaminated by noise. In the noisy setting, the PARCUR model becomes:
\begin{align}
\label{eq:ParametricCurveModelNoise}
\begin{split}
y & = y(s)+\epsilon_{0}, 
\\
x_{j} & = x_{j}(s)+\epsilon_{j},\;\;\;j=1,\dots,p;
\\
s&\in[a,b]\subset{\mathbb R}.
\end{split}
\end{align}
For simplicity we assume here the most common statistical hypotheses $\epsilon_{j}\sim{\mathcal N}(0,\sigma_{j}^{2})$, $j=0,\dots,p$, which provides an adequate description of the ``standard'' errors due to the finite sample size. We investigate the effect of the additive Gaussian noise $\epsilon_{j}\sim{\mathcal N}(0,\sigma_{j}^{2})$, $j=0,\dots,p$ applied in the following three ways: 
\begin{enumerate}
\item{noise only in the dependent variable $y$: $\sigma_{0}=\sigma\neq 0$, $\sigma_{j}=0$, $j=1,\dots,p$}
\item{independent and identically distributed (i.i.d.) noise in $S_{n}$: $\sigma_{j}=\sigma\neq 0$, $j=1,\dots,p$}
\item{independent, but not identically distributed noise in $S_{n}$: 
\begin{align}
\label{eq:NoiseStructure}
\sigma_{j}=\begin{cases}
\sigma\neq 0,& \text{if}\;\;j=0\;\;\text{and}\;\;x_{j}\in\{\text{noisy predictors}\}\\
0, &\text{if}\;\;j=0\;\;\text{and}\;\;x_{j}\in\{\text{exact predictors}\}
\end{cases}
\end{align}
}
\end{enumerate}
Obviously, the first `classical' case belongs to the last class if considered over the complete data matrix $S_{n}$.

To avoid overfitting and mitigate the effect of noise on the model predictions, one can use any of the standard regularization techniques, such as the Tikhonov regularization or the truncated SVD. In the case of the IR model, truncating the number of terms of the monomial basis used to represent the column functions appears to be the most natural regularization approach.

In the previous \Cref{sec:PARCUR}, the exact training data ${\bm y}$, could simply be sorted by magnitude and used as the values of the curve parameter to construct the monomial basis matrix $V$. While this is still permitted in the noisy case (we can choose the parameter $s$ as we wish), the interpretation of the coefficient matrix $A$ is no longer straightforward, if instead of ${\bm y}$  we are using ${\bm y}+{\bm \epsilon}$. 

At the risk of losing some of the interpretability of the coefficient matrix $\hat{A}$, we shall nevertheless be sorting all our  data by the magnitude of ${\bm y}+{\bm \epsilon}$. Notice, that the vector ${\bm y}+{\bm \epsilon}$ sorted by the (inaccessible) exact data ${\bm y}$ represents a smooth function (irregularly sampled $y$) with an additive `white' noise. Sometimes, sorting by the exact ${\bm y}$ can be achieved if the predictors contain a `clean' (noiseless) monotonous function of $y$ or even a noisy monotonous function that has been sorted in correct order. In that case, all columns in $S_{n}$ should be sorted by the column corresponding to this predictor variable.

If ${\bm y}+{\bm \epsilon}$ is correctly sorted by ${\bm y}$, then a multitude of denoising techniques is available for such smooth noisy signals, e.g., the Wiener filter. However, numerical experiments indicate that, in our case, the Wiener filter becomes effective starting from approximately $n=150$ data samples and is of little use below that threshold. Similar lower bound seems to hold for the effectiveness of the cross-validation and similar techniques that allow to deduce the optimal value of the regularization parameter (maximal degree $r^{*}$ of the monomial basis functions) from the training data. Therefore, we propose a regularization approach that depends on the number of available training samples, see \cref{tab:RegularizationScheme}. 
\begin{table}[t!]
    \centering
    \begin{tabular}{|c|c|c|}
        \hline
         & $n< 150$ & $n\geq 150$ \\
        \hline
        Step~1 & \multicolumn{2}{c|}{Sort ${\bm y}+{\bm \epsilon}$ (if possible by ${\bm y}$)}\\
        \hline
        Step~2 & -- & Apply Wiener filter on sorted ${\bm y}+{\bm \epsilon}$\\
        \hline
        Step~3 & Sort all columns by sorted ${\bm y}+{\bm \epsilon}$ & Sort all columns by sorted and filtered ${\bm y}+{\bm \epsilon}$\\
        \hline
        Step~4 & Choose $r^{*}$ subjectively &Use CV to find $r^{*}$\\
         \hline
    \end{tabular}
    \caption{Regularization strategies for the IR model with truncated monomial basis, depending on the number of training samples.}
    \label{tab:RegularizationScheme}
\end{table}

The optimally regularized representation $\hat{\bm x}_{j}(r^{*})$ of a noisy data-column ${\bm x}_{j}+{\bm \epsilon}_{j}$ minimizes the error between the exact (noiseless) column ${\bm x}_{j}$ and its regularized representation $\hat{\bm x}_{j}(r^{*})$, e.g. $\Vert{\bm x}_{j}-\hat{\bm x}_{j}(r^{*})\Vert_{2}$, where $r^{*}$ is the truncation index in terms of the monomial basis. 

Since the noiseless column is not known, the optimal truncation index $r^{*}$ is determined either subjectively by observing the quality of fit for several columns ($n<150$) or by the Cross Validation (CV) technique ($n\geq 150$). 

We employ a $10$-fold CV, where during each fold the model is trained on a randomly chosen subset of the training dataset and evaluated on a complementary (validation) subset.  As a metric, we consider the following validation errors:
\begin{align}
\label{eq:RhoV}
    \rho({\bm y}_{\rm v}) = \left\langle \frac{\|\bm{y}_{\rm v} - \hat{\bm y}_{\rm v}(r)\|_2}{\|\bm{y}_{\rm v}\|_2}\right\rangle,\;\;\;\;\;\;
    \rho(X_{\rm v}) = \left\langle \frac{\|X_{\rm v} - \hat{X}_{\rm v}(r)\|_{\rm F}}{\|X_{\rm v}\|_{\rm F}}\right\rangle,
\end{align}
where the angular brackets denote the arithmetic averaging over the CV folds. The optimal truncation index $r^{*}$ corresponds to the polynomial degree for which the minimum of $\rho(X_{\rm v})$ is attained. 

There is a curious reason behind the fact that we have to use the error $\rho(X_{\rm v})$ in the $X$-data rather than the usual error $\rho({\bm y}_{\rm v})$ in the ${\bm y}$-data. In the polynomial IR method, the vector of the dependent variable (either exact or noisy) is the second column of the basis Vandermonde matrix $V$, see (\ref{eq:Vandermonde}). Therefore, in exact arithmetic, the training ${\bm y}$-data is exactly reproduced for any $r\geq 2$. Hence, strictly speaking, the IR model always over-fits the training ${\bm y}$-data.
The regularization of the IR model is validated and tuned on the columns of the matrix $X$. This is possible, since apart from the noise in $X$ itself, the ${\bm y}$-noise is always propagated into the Vandermonde matrix $V$ and then into the columns of the coefficient matrix $\hat{A}=(V^{T}V)^{-1}V^{T}X$. Thus, the prediction $\hat{X}_{\rm t}=X_{\rm t}\hat{A}^{T}(\hat{A}\hat{A}^{T})^{-1}\hat{A}X$ will be always affected by noise. Simply put, using the noisy data ${\bm y}+{\bm \epsilon}$ to construct $V$ is equivalent to using a wrong parameterization $s\neq y$, whereas the column data are generated in the $s=y$ parameterization.

Finally, it is important to realize that there are two ways to regularize the noisy data. One, which is the focus of the present section, is to choose a single optimal (maximal) degree $r^{*}$ for the representation of all data columns, i.e., both ${\bm y}$ and all ${\bm x}_{j}$, $j=1,\dots,p$. This, obviously, has its drawbacks, since some of the columns may be noiseless or contain a different level of noise. A more flexible and precise way is to determine the optimal degree $r_{j}^{*}$, $j=0,1,\dots,p$, for each column individually. Our preliminary numerical experiments have shown that the latter `flexible' regularization approach does not necessarily result in a better prediction of the test ${\bm y}_{\rm t}$ data. Nonetheless, in our opinion, this flexible regularization deserves a separate in-depth investigation in the case of the non-i.i.d noise and for the purposes described in the next \Cref{sec:Features}.

\Cref{fig:Error_Ex_X_y} (bottom) illustrates the testing of the polynomial-degree truncation regularization scheme for the IR model on the three noiseless datasets described in \Cref{sec:ModelErrors}. The meaning of the lines and symbols is the same as in \Cref{fig:Error_Ex_X_y} (top), see \Cref{sec:ModelErrors}. The errors $\rho({\bm y}_{\rm v})$ (left, colored, dashed) and $\rho(X_{\rm v})$ (right, colored, dashed) on the training dataset are the mean CV errors (\ref{eq:RhoV}). All data are exact up to numerical precision. The only differences with the IR model of \cref{fig:Error_Ex_X_y} (top) are the over-complete nature of the training datasets ($n=150$) and the application of the LS estimates (\ref{eq:LSTraining}) of the coefficient matrix $\hat{A}_{r}$. Therefore, the horizontal axis in \Cref{fig:Error_Ex_X_y} is the number $r$ of columns in the rectangular basis matrix $V\in{\mathbb R}^{n\times r}$, which is no longer equal to the number of samples $n$.

In the tests of \Cref{fig:Error_Ex_X_y} (bottom-left), the $\rho({\bm y}_{\rm v})$ error starts at the level of machine precision for $r=2$ and increases thereafter due to the accumulation of numerical errors. That the $\rho({\bm y}_{\rm v})$ error grows for $r>2$ has to do with the fact that the exact representation of ${\bm y}$ is attained already at $r=2$ in the monomial basis. 
The enlarged coloured markers in the right plot indicate the minima of the $\rho(X_{\rm v})$, the same markers in the left plot show the levels of $\rho({\bm y}_{\rm t})$ errors attained with the corresponding $r^{*}$. As can be seen, the optimal $r^{*}$ gives the smallest achievable error $\rho({\bm y}_{t})$. Hence, in this noiseless case the regularization procedure correctly identifies the rank of the complete dataset as the optimal $r^{*}$.

In the next numerical experiments, for training, validation and testing, we use a synthetic dataset with $p=202$ predictors. Two of these $202$ predictors are generated by randomly sampling a non-functional relationship such as those shown in \Cref{fig:nn-func-3d-example}. One of these `non-functional' columns has the data from a curve that cannot be parameterized by $y$, another has the data from a cone. The remaining $200$ features are obtained by evaluating polynomials of degree lower than $12$ at the sample points $y_{i}$. While we have investigated all three types of noise listed above Eq.~(\ref{eq:NoiseStructure}), we only present the results for the first and the third cases, as the second, i.i.d. case appeared to be very similar to the third, non-i.i.d case. In all examples we use $n=150$ and the optimal polynomial degree $r^{*}$ is found with the CV method at the minimum of $\rho(X_{\rm v})$.

\Cref{fig:Regularization1} illustrates the application of the regularization procedure to the problem where only the dependent-variable data $\bm y$ contains additive Gaussian noise. The columns of the matrix $X$ are exact up to machine precision. We consider three different standard deviations $\sigma=0.05, 0.1$, and $0.2$, corresponding to $5\%$, $10\%$, and $20\%$ levels of noise relative to the ${\bm y}$-data magnitude. As can be seen from the test-error curves (top-left plot, dim solid gray), the errors $\rho({\bm y}_{\rm t})$ attained with these choices of $r^{*}$ (top-right plot,  enlarged colored squares) are close to the smallest achievable $\rho({\bm y}_{\rm t})$ errors for all considered noise levels. At the same time, the minimal $\rho({X}_{\rm v})$ errors do not correspond to the smallest achievable $\rho({X}_{\rm t})$ errors (top-right plot, dim solid gray curves).

\Cref{fig:Regularization2} illustrates the case of additive noise in the dependent variable and in one-third of the predictors. While the behavior is generally similar to the previous case, the attained optimal $\rho({\bm y}_{\rm t})$ errors increase more rapidly with the level of noise. The growth of the $\rho({\bm y}_{\rm v})$ error (bottom-left, colored dashed) is now caused not only by the accumulation of numerical errors and the statistical noise in ${\bm y}$ (via $V$), but by the statistical noise in the $X$-data as well (via $\hat{A}$).

\begin{figure}[t!]
\includegraphics[width=1\linewidth,trim= {3cm 0cm 2cm 3cm},clip]{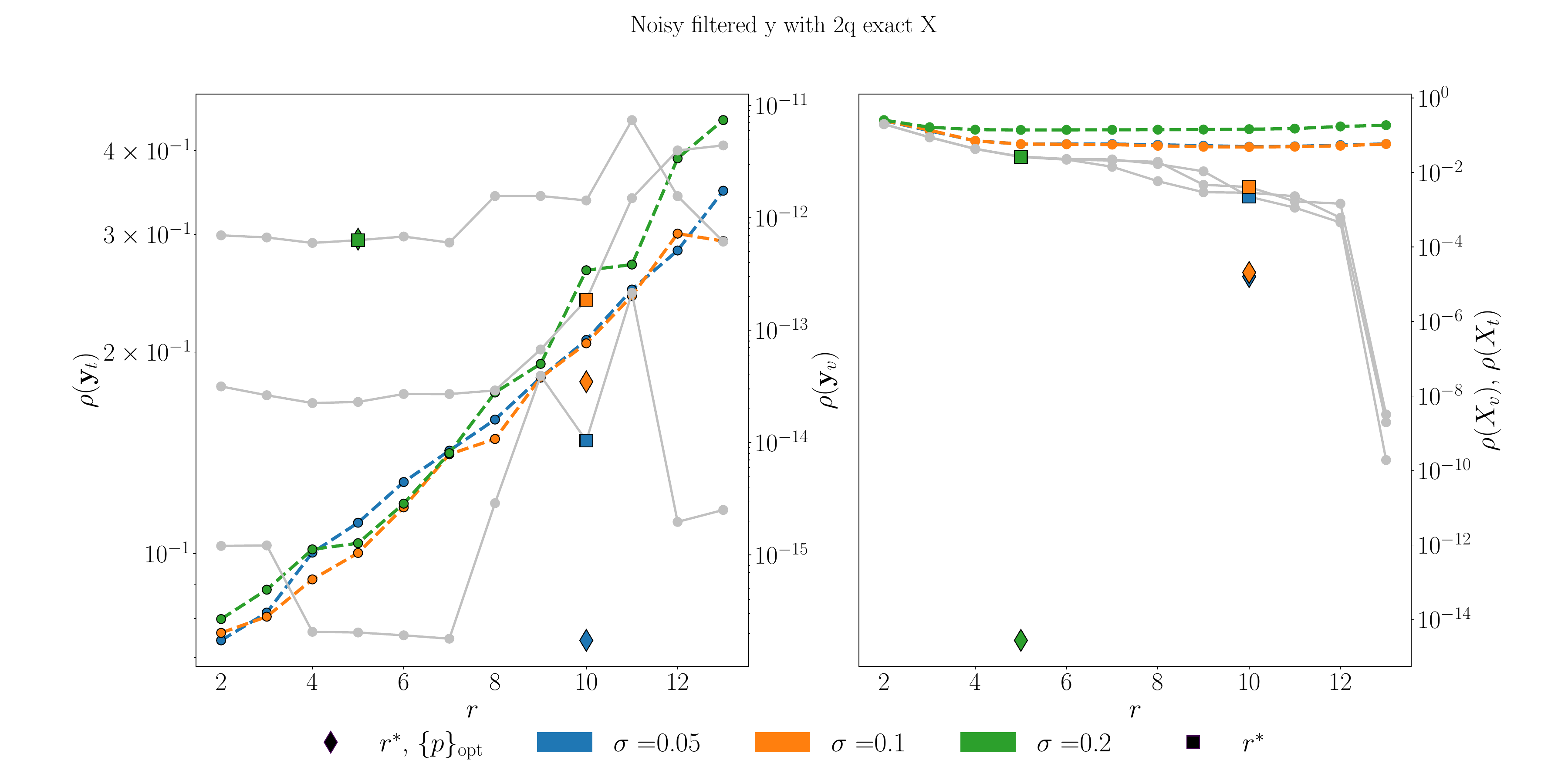}
\caption{
\label{fig:Regularization1}
Regularization and improper-variable removal in the case of noisy dependent-variable data ${\bm y}+{\bm \epsilon}$, ${\bm \epsilon}\sim{\mathcal N}({\bm 0},\sigma_{i}^{2}I)$, with $\sigma_{1}=0.05$ (blue), $\sigma_{2}=0.1$ (orange), and $\sigma_{3}=0.2$ (green). Predictor data $X$ is noiseless. Bright dashed colored lines: validation errors $\rho({\bm y}_{\rm v})$ (left plot, right vertical axis) and $\rho(X_{\rm v})$ (right plot). Dim solid gray lines: errors on the test dataset, $\rho({\bm y}_{\rm t})$ (left plot, left vertical axis) and $\rho(X_{\rm t})$ (right plot). Bright colored square markers: values of the test errors $\rho({\bm y}_{\rm t})$ and $\rho({X}_{\rm t})$ attained at $r^{*}$. Bright colored diamond markers: values of the test errors attained with $r^{*}$ and the optimal set $\{p\}_{\text{opt}}$ of predictors (improper variables removed). 
}
\end{figure}

\begin{figure}[t!]
\includegraphics[width=1\linewidth,trim= {3cm 0cm 2cm 3cm},clip]{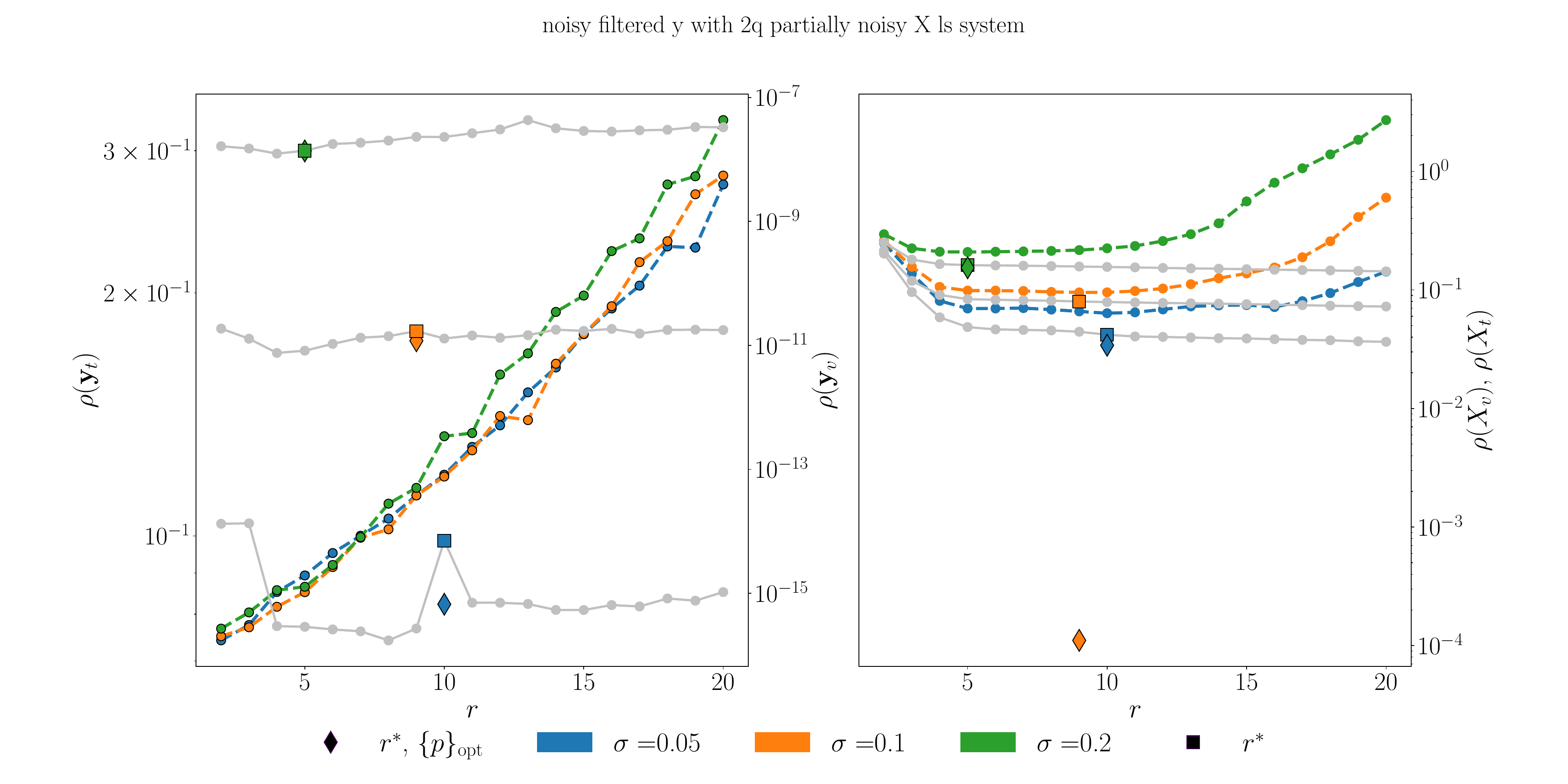}
\caption{
\label{fig:Regularization2}
Regularization and improper-variable removal with additive Gaussian noise present in both the dependent-variable ${\bm y}$ as well as in one third of predictor variables (columns of $X$). Levels of noise and plot legend are the same as in \Cref{fig:Regularization1} 
}
\end{figure}

Finally, we remark that the error $\rho({\bm y}_{\rm v})$ attained with the optimal $r^{*}$ on the training-validation dataset is always much smaller (see the range of the right vertical axis) than the corresponding error $\rho({\bm y}_{\rm t})$ on the test dataset. This is, obviously, one more case of the benign overfitting \cite{BenignOverfitting2020}, \cite{BenignOverfitting2022}, caused by the choice of the monomial column basis in or IR model rather than the statistical properties of the noise.

\section{Removal of improper predictors}
\label{sec:Features}
From the analysis and examples of the previous sections it is clear that a linear model, be it the classical MLR implementation or the present IR formulation, will produce reasonably exact predictions even if some of the predictor variables do not satisfy the model assumptions. In the regularized IR formulation of \Cref{sec:Regularization} the model assumption is that the predictor is a polynomial in the dependent variable $y$ of degree at most $r^{*}-1$, where $r^{*}$ is the rank of the optimally regularized $\hat{A}$. In the case of exact data discussed in \Cref{sec:ModelErrors}, the training set may become complete way before some of the predictor functions have been properly sampled. Success of predictions in the presence of such `improper' predictors gives an illusion of understanding of the underlying natural phenomena \cite{IllusionsPaper}. 

In \Cref{sec:Regularization} we have also investigated the case where the noise is present only in some of the predictors. Such noisy predictors may be negatively affecting the predictive power of the model. 
Thus, there appear to be two good reasons to detect and possibly discard both the high-degree/non-functional and the noisy predictors from the model. Additionally, in biological and agricultural applications, the large number of predictors included in the FOP dataset is often due to a broad (untargeted) experimental search in the absence of any {\it a-priori} information. The practical goal of such studies is to identify a preferably small subset of microbiota, fungi, molecules, metabolites, or genes, allowing for future targeted and less expensive measurements of these predictor variables. 

Discarding `unnecessary' predictors is known as feature or variable selection in statistics and ML. The main idea of feature selection is simple: one can sometimes achieve the same or even better prediction with just a subset of predictors. However, no clear principle for the inclusion or removal of any particular predictor has been put forward so far. Therefore, feature selection methods are either combinatorial or heuristic in the ways they produce the candidate subsets of predictors. Moreover, the criterion for choosing a particular subset is the prediction error on the training dataset. Since this error has already been used to find the optimal regularization parameter, the feature selection methods are often regarded with suspicion in statistics \cite{Izenman2008}, Chapter $5$.

So far we have tuned one hyper-parameter, the optimal degree $r^{*}$, to define the regularized polynomial IR model. We have used the CV procedure on the training $X$-data to determine this hyper-parameter. This leaves the training ${\bm y}$-data and the test $X_{\rm t}$-data available for tuning other hyper-parameters in our model.

Since the goal is to remove the eventual `improper' predictors, it is logical to use the prediction of the $X_{\rm t}$-data as the `usefulness' criterion for each predictor. We introduce the column-wise test set prediction error:
\begin{align}
\label{eq:ColumnwiseResidual}
\chi_{j} = \frac{\Vert \hat{\bm x}_{j} - {\bm x}_{j} \Vert_{2}}{\Vert{\bm x}_{j} \Vert_{2}},\;\;\;j=1,\dots,p,
\end{align}
where ${\bm x}_{j}$ and $\hat{\bm x}_{j}$ are, respectively, the $j$th columns of the test matrix $X_{\rm t}$ and its predictor $\hat{X}_{\rm t}$. One would, naturally, like to remove the predictors with large $\chi_{j}$'s. For example, all predictors with $\chi_{j}\geq \tau$. 

This introduces another hyper-parameter, the threshold $\tau$, which can be tuned utilizing the last available portion of our data -- the training ${\bm y}$-data. Recall that this data could not be used to tune the regularization parameter $r^{*}$, since $\rho({\bm y})$ always grows with $r$. Yet, for a fixed $r=r^{*}$ the error $\rho({\bm y})$ depends on the set of included predictors, and we call it the feature removal error. \Cref{tab:DataUsage} summarizes the data usage by the regularized polynomial IR algorithm with predictor removal for tuning its hyper-parameters.
\begin{table}
    \centering
    \begin{tabular}{|c|c|}
        \hline
        {\bf hyper-parameter} & {\bf data used} \\
        \hline
        optimal degree $r^{*}$ & $\rho(X_{\rm v})$, training $X$-data\\
        \hline
        which predictor to remove & $\chi_{j}$, test $X_{\rm t}$-data\\
        \hline
        optimal threshold $\tau_{\rm opt}$ & $\rho({\bm y})$, training ${\bm y}$-data\\
        \hline
    \end{tabular}
    \caption{Hyper-parameters of the regularized polynomial IR model with predictor removal and the datasets used to tune these hyper-parameters.}
    \label{tab:DataUsage}
\end{table}

We define the optimal threshold $\tau_{\rm opt}$ to be the one for which the feature-removal error $\rho({\bm y})$ with $r=r^{*}$ is minimal. Unfortunately, while $\rho({\bm y})$ normally tends to grow for small $\tau$, since one starts to loose the useful predictors, the overall dependence of $\rho({\bm y})$ on $\tau$ is neither unimodal nor smooth. Meaning, that there may be a set or a range of $\tau_{\rm opt}$ that gives approximately the same $\rho({\bm y})$ at $r=r^{*}$. In such cases we suggest to make a subjective choice of $r^{*}$ favouring the smallest number of retained predictors.

We have applied the predictor removal procedure to the case of $200$ polynomial columns with degrees $q-1\leq 12$ and two non-functional columns in the data-matrix $X$, where the dependent variable and a third of the columns of $X$ are affected by additive Gaussian noise with $\sigma=0.05$.  

\Cref{fig:FeatureRemovalError} presents two examples of the feature removal error on the training ${\bm y}$-data (solid blue) together with the corresponding non-accessible feature removal errors on the test ${\bm y}_{\rm t}$-data (dashed gray). In the left plot, where the level of noise is $\sigma=0.05$, the choice of the optimal threshold $\tau_{\rm opt}$ from the minimum of the training ${\bm y}$-data corresponds to the smallest possible number of predictors. In the right plot, where $\sigma=0.1$, the detected minimum of the feature removal error is situated in the second valley of the error function and corresponds to many unnecessary predictors retained in the model, as the depth of both valleys is approximately the same. Checking for other local minima in this case and choosing the leftmost along the $\tau$-axis would be a better strategy to find $\tau_{\rm opt}$. Regardless, it is comforting to observe that the feature removal errors on the training and test datasets appear synchronous in these numerical examples.
\begin{figure}
\begin{subfigure}[c]{0.49\linewidth}
\includegraphics[width=1\linewidth, trim=2.6cm 0cm 2.5cm 1.92cm, clip]{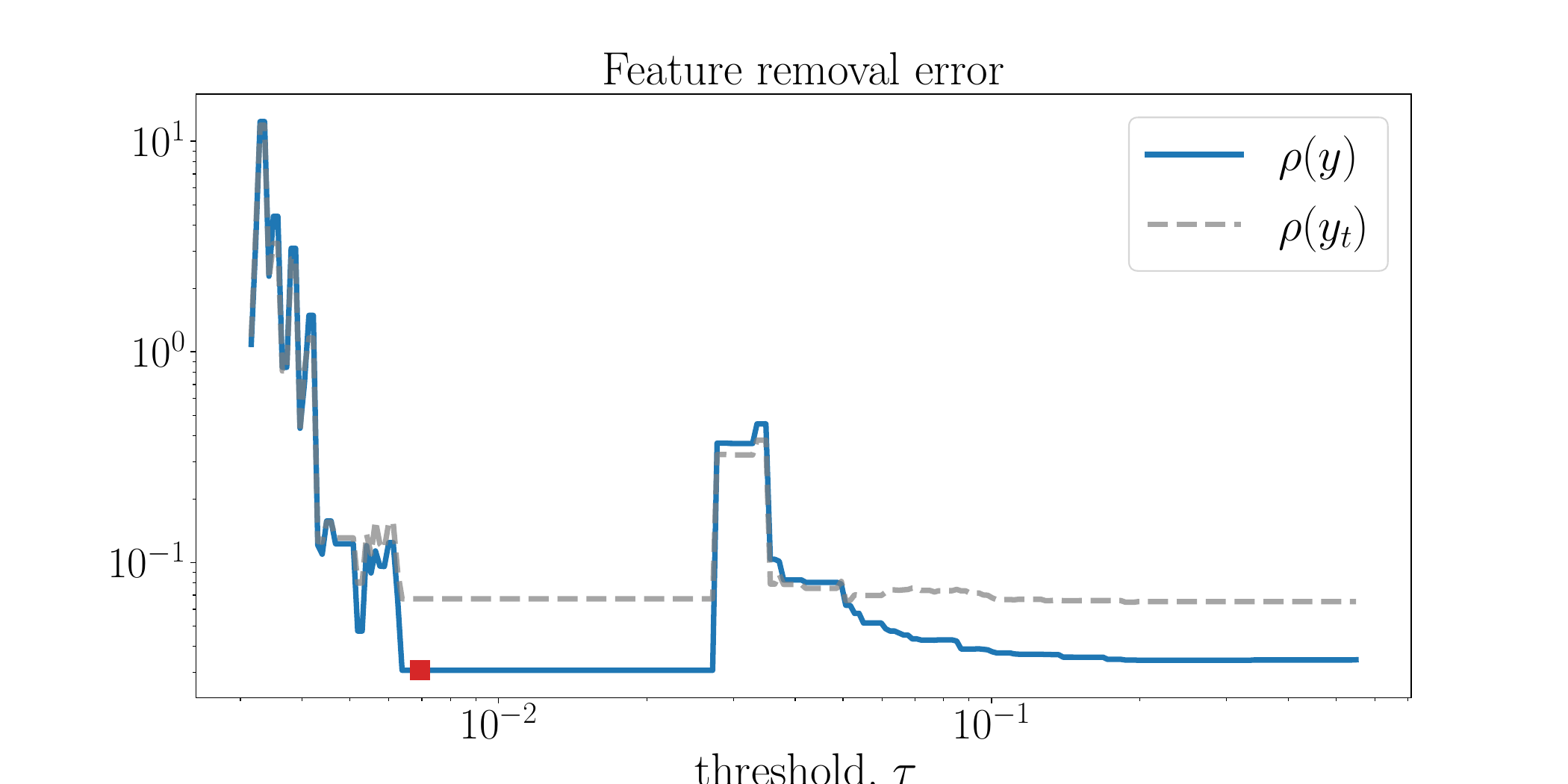}
\end{subfigure}\hfill
 \begin{subfigure}[c]{0.49\linewidth}
     \includegraphics[width=1\linewidth, trim=2.6cm 0cm 2.5cm 1.92cm, clip]{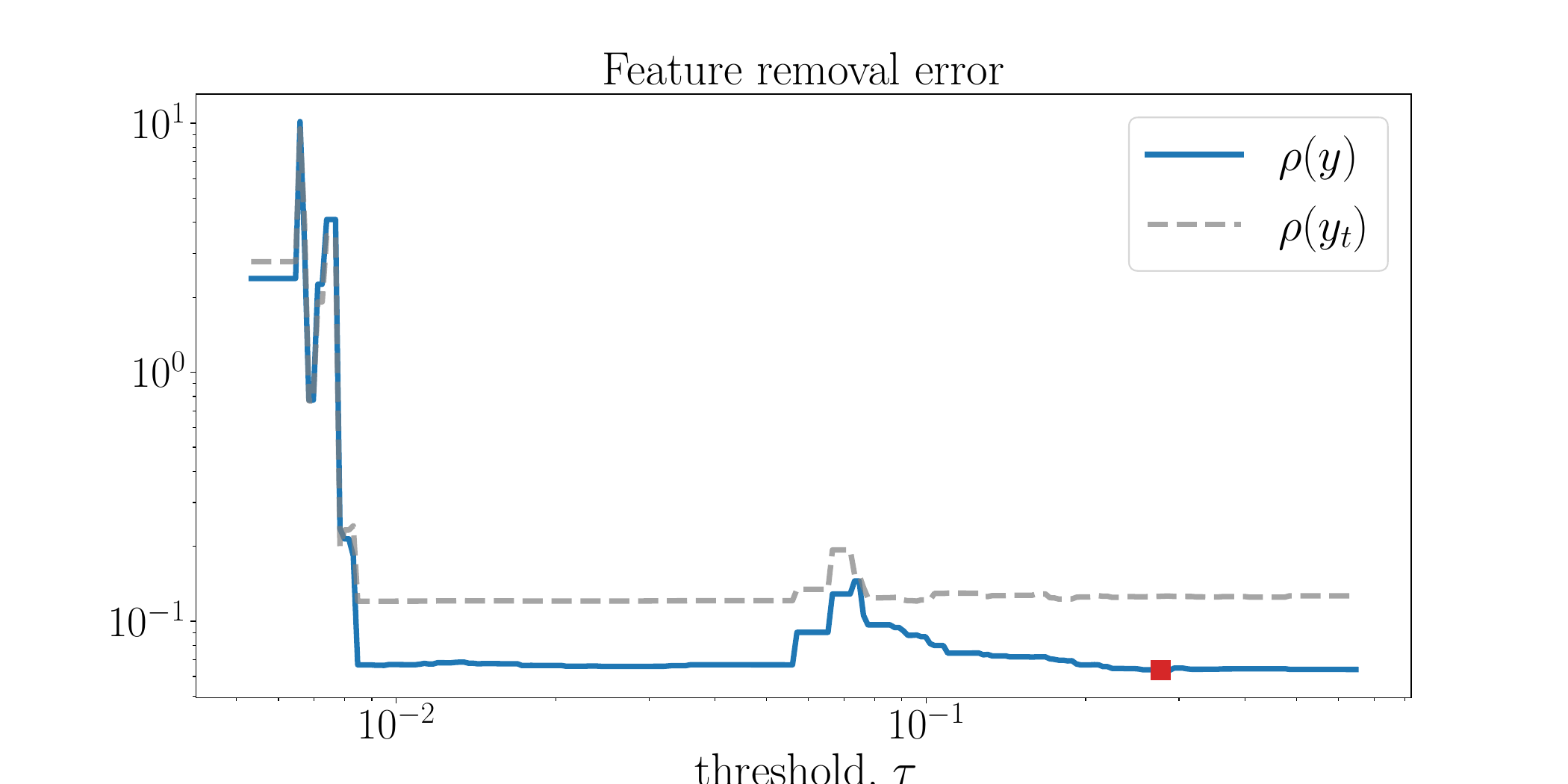}
 \end{subfigure}
 \caption{\label{fig:FeatureRemovalError} Feature removal errors $\rho({\bm y})$ (solid blue) and $\rho({\bm y}_{\rm t})$ (dashed gray), both with $r=r^{*}$, for synthetic datasets containing additive noise in the dependent variable and one-third of predictors, $\sigma=0.05$ (left) and $\sigma=0.1$ (right). Red squares indicate the minima, corresponding to the optimal thresholds $\tau_{\rm opt}$.}
\end{figure}

In \Cref{fig:col_sel_res_X} (top) the green triangles show the true `degrees' (lengths of the nonzero parts of the corresponding columns of the coefficient matrix $A$, used to generate the predictor data) for each predictor. The vertical dashed dark-grey lines indicate the predictors containing the Gaussian noise. The vertical dashed red lines indicate the non-functional predictors. The horizontal solid orange line depicts the optimal degree $r^{*}$.

In \Cref{fig:col_sel_res_X} (bottom) the orange circles show the value of the column prediction error $\rho({\bm x}_{j})$ on the test $X_{\rm t}$-data for each predictor obtained with $r=r^{*}$. The dashed blue line is the optimal threshold $\tau_{\rm opt}$. All predictors with the errors above that line (solid light-gray vertical lines) are discarded. Notice that all noisy predictors, as well as both non-functional predictors are successfully removed.

\begin{figure}
    \centering
     \includegraphics[width=\linewidth, trim=4.2cm 0 4.7cm 3.2cm, clip]{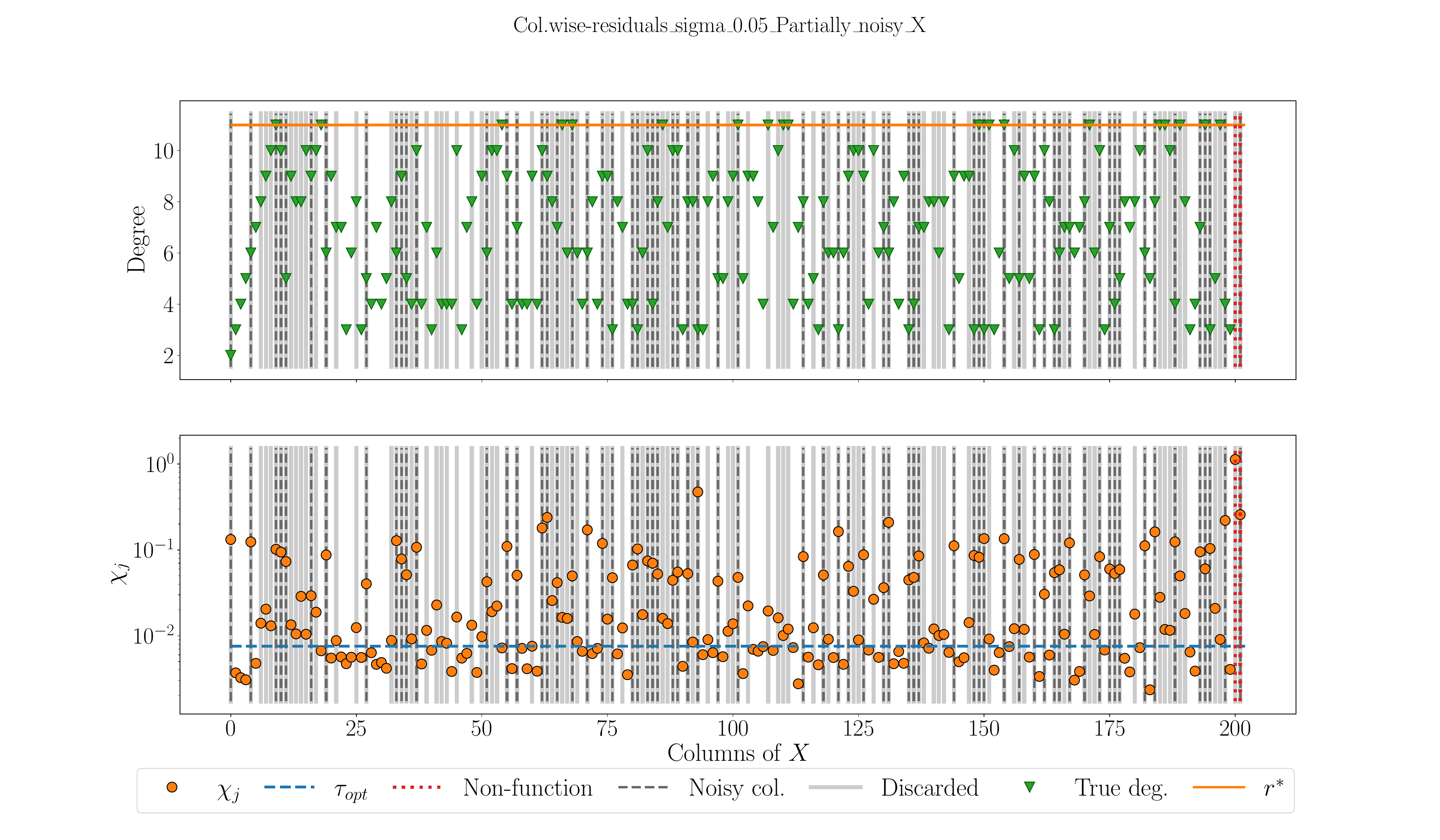}
     \caption{Removal of `improper' predictors (vertical solid light-gray lines) from synthetic data. Top: actual degrees of each predictor (green triangles), the global truncation degree $r^{*}$ (orange horizontal line), predictors containing Gaussian noise (vertical dashed dark-gray lines), non-functional predictors (vertical dotted red lines). Bottom: $\chi_{j}$ errors (orange circles), the optimal threshold $\tau_{\rm opt}$ (horizontal dashed blue line). }
     \label{fig:col_sel_res_X}
\end{figure}

Diamond markers in \Cref{fig:Regularization1} and \Cref{fig:Regularization2} show by how much the removal of `improper' predictors improves the prediction error $\rho({\bm y}_{t})$. In fact, it is close to the minimal achievable error with all predictors retained. That level of error was, however, not always obtained with the present crude global regularization procedure. Hence, the removal of predictors may sometimes compensate for the errors in determining the optimal regularization parameter.

\section{Application to chemometric data}\label{sec:yarn}
In this section we present the application of the regularized IR model with predictor removal to the experimental dataset that is often used to compare various regularized MLR models, such as the Principal Component Regression (PCR) or the Partial Least Squares (PLS) \cite{Izenman2008}. This dataset was first published in \cite{YarnPaper} and is also included in the $R$-library {\tt pls} \cite{plsRpackage} under the name of {\tt yarn}. The dataset consists of $28$ Near InfraRed (NIR) spectra of Polyethylene terephthalate (PET) yarns, measured at $p=268$ wavelengths, and $28$ corresponding yarn densities. Seven of these $28$ samples are designated as the test dataset and the remaining $n=21$ constitute the training dataset. The predictor variables (spectral amplitudes) change continuously with the wavelength, which becomes clear if one plots the rows of $X$ as curves, see the middle plot of \Cref{fig:yarn3}.

The training dataset columns have been sorted by the ${\bm y}$-data. Since $n=21<150$, we use the visual inspection of the fitted polynomials, see \Cref{fig:yarn2}, rather than the CV procedure to determine the optimal degree $r^{*}=5$. The corresponding validation error $\rho(X_{\rm v})$ is shown in \Cref{fig:yarn4} (left plot) with its minimum reached at the maximum displayed degree $r^{*}=5$. The first three rows of the coefficient matrix $\tilde{A}$ obtained with $r=r^{*}$ are shown as curves in the bottom plot of \Cref{fig:yarn3}. These curves correspond to the constant, linear and quadratic components in the polynomial fits $x_{j}(y)$ of each predictor. The column-wise prediction errors $\chi_{j}$ are shown in the top plot of \Cref{fig:yarn2} (solid blue), together with the optimal threshold $\tau_{\rm opt}$ (horizontal dashed red). The $\rho({\bm y})$ error with $r=r^{*}=5$ as a function of $\tau$ can be seen in \Cref{fig:yarn4} (right plot, solid blue), with its minimum indicated by the red square marker. This minimum is close to the minimum of the inaccessible $\rho({\bm y}_{\rm t})$ error (dashed gray). Here, as in the numerical examples of \Cref{sec:Features}, we also observe the synchronous behavior of the feature removal errors on the training and test datasets.

The removed `improper' predictors are indicated with vertical solid light-gray lines in \Cref{fig:yarn2} (middle and bottom). We have also extracted a few typical predictor data (colored vertical lines in \Cref{fig:yarn3}) and displayed them in \Cref{fig:yarn2} using the same colors and line styles for the corresponding polynomial fits. The retained predictors are in the left plot of \Cref{fig:yarn2} and the removed predictors are in the right plot. It is easy to recognize the cone-type data, similar to the data in \Cref{fig:nn-func-3d-example} (right), in one of the discarded predictors (blue). All predictor data in the discarded gray zone around the dashed blue line in \Cref{fig:yarn3} have this `conical' shape. This means that the NIR data in this frequency band is situated on a hyper-cone. Even if this frequency band is not required to build a high-quality predictive IR model, it may become useful for testing other higher-dimensional manifold and nonlinear models. 

Finally, in \Cref{fig:yarn5} we display the standard quality-of-prediction scatter plot which compares the exact values of ${\bm y}$ against their predictions $\hat{\bm y}$. As can be seen, the removal of the `improper' predictors significantly improves the quality of prediction. In fact, the prediction error $\rho({\bm y}_{\rm t})$ on the test dataset goes down from $0.28$ to $0.05$ after the predictor removal procedure.

\begin{figure}
\begin{subfigure}[t]{0.49\linewidth}
    \includegraphics[width=\linewidth]{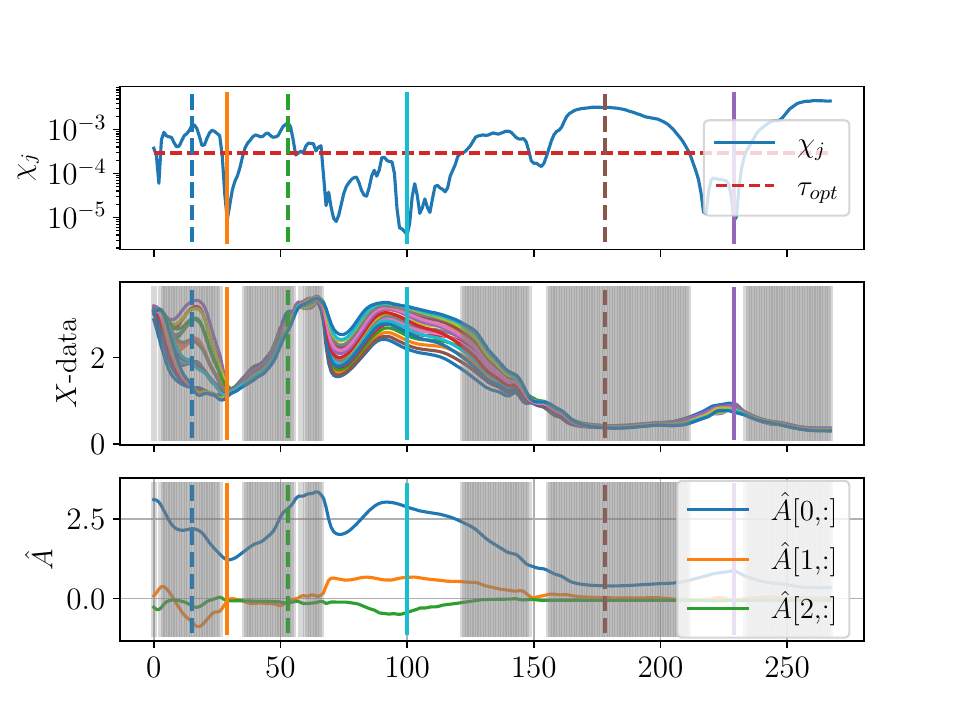}
\caption{}
    \label{fig:yarn3}
\end{subfigure}
\begin{subfigure}[t]{0.49\linewidth}
    \includegraphics[width=\linewidth, trim={0 0 0 1.4cm}, clip]{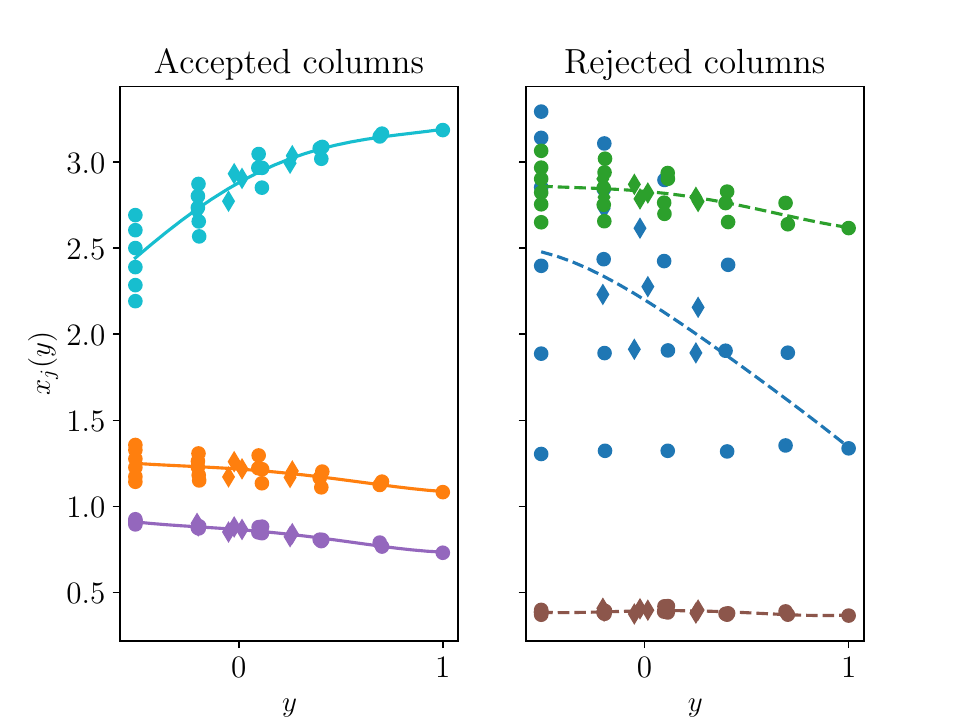}
\caption{}
    \label{fig:yarn2}
\end{subfigure}
    \caption{(a): column-wise prediction errors $\chi_{j}$ and the optimal threshold $\tau_{\rm opt}$ (top); 
    rows of $X$ displayed as curves (middle); first three rows of $\hat{A}$ displayed as curves (bottom). Removed predictors are shown a vertical light-gray lines (middle, bottom). 
    (b): predictor data and the corresponding polynomial fits for the retained (left) and removed (right) predictors.}
\end{figure}

 \begin{figure}
    \centering
\begin{subfigure}[t]{0.48\linewidth}
    \includegraphics[width=\linewidth]{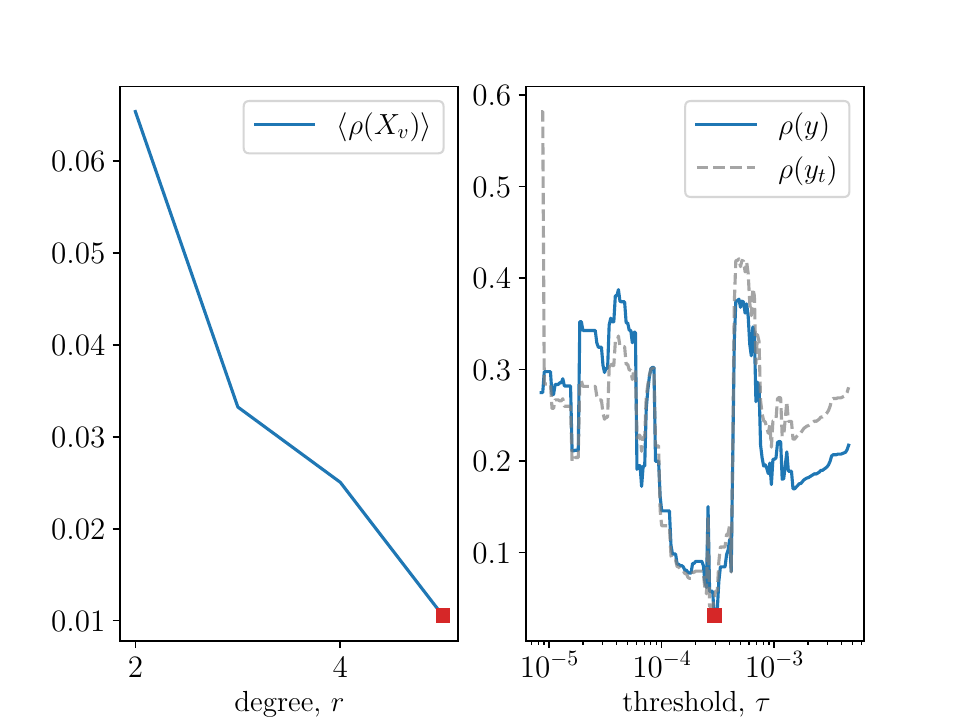}
    \caption{}
    \label{fig:yarn4}
\end{subfigure}\hfill
\begin{subfigure}[t]{0.48\linewidth}
     \includegraphics[width=\linewidth]{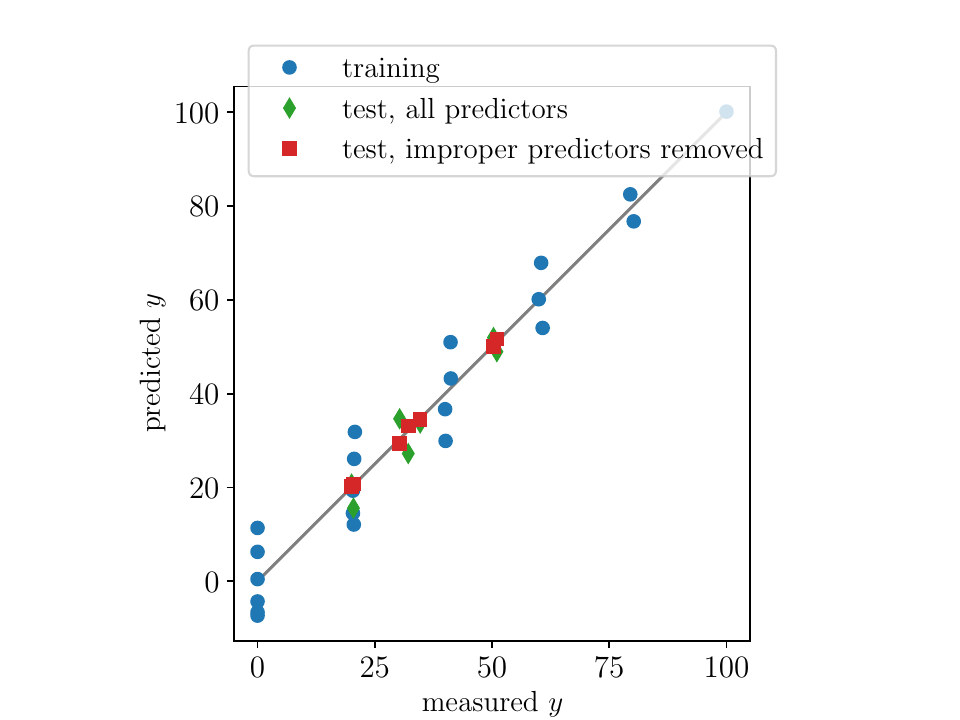}
    \caption{}
    \label{fig:yarn5}   
\end{subfigure}  
\caption{(a): validation error $\rho({\bm y}_{\rm v})$ as a function of $r$ (left); feature removal error $\rho({\bm y})$ at $r=r^{*}$ as a function of threshold $\tau$ (right). (b): $\hat{\bm y}$ v.s. ${\bm y}$ for the training dataset (blue circles) and the test dataset predicted at $r=r^{*}$ with all predictors (green diamonds) and after the removal of `improper' predictors (red squares).}
\end{figure}

\section{Conclusions}
The hyper-curve PARCUR model introduced in this paper provides an alternative, predictor-centered look at the overparameterized multiple linear regression. Within this framework we have been able to focus on the individual roles of predictors and to show that a linear model can be trained and will make perfect predictions even in the presence of predictor variables that violate the linear model assumptions, thus giving an illusion of understanding of the underlying natural phenomena \cite{IllusionsPaper}. The column-centered nature of our approach allowed us to come up with a rigorous algorithm for detecting such `improper' predictors. Moreover, we observe that removing these predictors improves not only the adequacy but also the predictive power of the model.

The polynomial IR version of the PARCUR model has been investigated here in considerable detail. It attempts to build an `inverse' relation between the dependent variable and each predictor that we believe is much easier to interpret than the weights of the regression vector in the classical MLR model. The polynomial IR model appears to work well with chemometric data, but may also be suitable for other `smooth' predictors. The main problem with applying the IR model is the difficulty of sorting the dependent variable by magnitude in the presence of noise. While directly using a noisy dependent variable as a parameter in the PARCUR model is not prohibited, it complicates the interpretation of the regression results.

Certain types of predictor data (microbiome, metabolome, genetic) may exhibit jump-like changes with the dependent variable, especially, in its lower and higher ranges. Successful application of the IR model to these kinds of data will depend on finding a suitable basis for the column function space, e.g., a piecewise-continuous finite-element basis.

Although the PARCUR model allows for a flexible per-column regularization, in this paper we have failed to illustrate its potential advantage over the traditional global regularization approach. A separate investigation of this regularization technique is warranted, since it may provide with a more rigorous way to detect and remove `improper' predictors.

Finally, as we have seen in the application of the polynomial IR model to chemometric data, it can detect the subsets of predictors for which a higher-dimensional model is more appropriate. Nonlinear functional relations between the predictors and the dependent variable produce data that are situated on such higher-dimensional manifolds. A two-parameter hyper-surface model would be a natural extension of the present single-parameter hyper-curve model.

\funding{This work was funded by HZPC Research B.V., Averis Seeds B.V., BO-Akkerbouw, and European Agricultural Fund for Rural Development}

\bibliographystyle{unsrt}
\bibliography{bibl}

\end{document}